%
%
%
%

\documentclass[runningheads,a4paper]{llncs}

\setcounter{tocdepth}{3}
\usepackage{graphicx}

\usepackage{url}

\urldef{\mailsa}\path|hshiino@yahoo-corp.jp|
\urldef{\mailsb}\path|sasaki@|
\urldef{\mailsc}\path|gang@|    
\urldef{\mailsd}\path|sugi@| 

\newcommand{\keywords}[1]{\par\addvspace\baselineskip
\noindent\keywordname\enspace\ignorespaces#1}

\usepackage[utf8]{inputenc} 
\usepackage[T1]{fontenc}    
\usepackage{hyperref}       
\usepackage{booktabs}       
\usepackage{amsfonts}       
\usepackage{nicefrac}       
\usepackage{microtype}      

\usepackage{subfigure} 
\setlength\abovecaptionskip{1mm}

\usepackage{amsmath,amsthm,amssymb,amsfonts}

\usepackage{algorithm,algorithmic}




\renewcommand{\exp}[1]{\mathrm{exp}\left(#1\right)}

\newcommand{\argmin}{\mathop{\rm argmin}\limits}

\newcommand{\trans}[1]{#1^{\top}}
\newcommand{\inv}[1]{#1^{-1}}

\newcommand{\norm}[1]{\left\|#1\right\|}
\newcommand{\abs}[1]{\left|#1\right|}

\newcommand{\bc}{\boldsymbol{c}}

\newcommand{\be}{\boldsymbol{e}}

\newcommand{\bh}{\boldsymbol{h}}

\newcommand{\bn}{\boldsymbol{n}}

\newcommand{\bs}{\boldsymbol{s}}
\newcommand{\bt}{\boldsymbol{t}}
\newcommand{\bu}{\boldsymbol{u}}
\newcommand{\bv}{\boldsymbol{v}}
\newcommand{\bw}{\boldsymbol{w}}
\newcommand{\bx}{\boldsymbol{x}}
\newcommand{\by}{\boldsymbol{y}}

\newcommand{\balpha}{\boldsymbol{\alpha}}
\newcommand{\bbeta}{\boldsymbol{\beta}}

\newcommand{\btheta}{\boldsymbol{\theta}}

\newcommand{\bvarphi}{\boldsymbol{\varphi}}

\newcommand{\bpsi}{\boldsymbol{\psi}}

\newcommand{\bA}{\boldsymbol{A}}
\newcommand{\bB}{\boldsymbol{B}}

\newcommand{\bG}{\boldsymbol{G}}

\newcommand{\bI}{\boldsymbol{I}}

\newcommand{\bQ}{\boldsymbol{Q}}

\newcommand{\bS}{\boldsymbol{S}}

\newcommand{\bSigma}{\boldsymbol{\Sigma}}



\begin{document}

\mainmatter  

\title{Whitening-Free Least-Squares\\Non-Gaussian Component Analysis}

\titlerunning{Whitening-Free Least-Squares\\Non-Gaussian Component Analysis}

%
%
\author{Hiroaki Shiino$^1$
\and Hiroaki Sasaki$^2$ 
\and Gang Niu$^3$ 
\and Masashi Sugiyama$^{4,3}$ 
}
\authorrunning{Whitening-Free Least-Squares Non-Gaussian Component Analysis}

\institute{Yahoo Japan Corporation \\
Kioi Tower 1-3 Kioicho, Chiyoda-ku, Tokyo 102-8282, Japan.
\and Nara Institute of Science and Technology\\
8916-5 Takayama-cho Ikoma, Nara 630-0192, Japan.
\and The University of Tokyo\\
5-1-5 Kashiwanoha, Kashiwa-shi, Chiba 277-8561, Japan.
\and RIKEN Center for Advanced Intelligence Project\\
1-4-1 Nihonbashi, Chuo-ku, Tokyo 103-0027, Japan.
}
%
%

\toctitle{Lecture Notes in Computer Science}
\tocauthor{Authors' Instructions}
\maketitle

\begin{abstract}
\emph{Non-Gaussian component analysis} (NGCA) is an unsupervised linear dimension reduction method
that extracts low-dimensional non-Gaussian ``signals'' from 
high-dimensional data contaminated with Gaussian noise.
NGCA can be regarded as a generalization of \emph{projection pursuit} (PP)
and \emph{independent component analysis} (ICA)
to multi-dimensional and dependent non-Gaussian components.
Indeed, seminal approaches to NGCA are based on PP and ICA.
Recently, a novel NGCA approach called \emph{least-squares NGCA} (LSNGCA) has been developed, 
which gives a solution analytically
through least-squares estimation of \emph{log-density gradients} and eigendecomposition.
However, since \emph{pre-whitening} of data is involved in LSNGCA,
it performs unreliably when the data covariance matrix is ill-conditioned,
which is often the case in high-dimensional data analysis.
In this paper, we propose a \emph{whitening-free} variant of LSNGCA and
experimentally demonstrate its superiority.

\keywords{non-Gaussian component analysis, dimension reduction, unsupervised learning}
\end{abstract}

\section{Introduction}
\label{introduction}

Dimension reduction is a common technique in high-dimensional data analysis
to mitigate the \emph{curse of dimensionality} \cite{vapnik1998statistical}.
Among various approaches to dimension reduction,
we focus on unsupervised linear dimension reduction in this paper.

It is known that the distribution of randomly projected data is close to Gaussian
\cite{Springer:Diederichs+:2013:SNGCA-SDP}.
Based on this observation,
\emph{non-Gaussian component analysis} (NGCA) \cite{JMLR:Blanchard+:2006:MIPP}
tries to find a subspace that contains 
non-Gaussian signal components
so that Gaussian noise components can be projected out.
NGCA is formulated in an elegant \emph{semi-parametric} framework
and non-Gaussian components can be extracted without specifying their distributions.
Mathematically, NGCA can be regarded as a generalization of
\emph{projection pursuit} (PP) \cite{IEEE:Friedman+:1974:Projection}
and \emph{independent component analysis} (ICA) \cite{NN:Hyvarinen+:2000:Independent}
to multi-dimensional and dependent non-Gaussian components.

The first NGCA algorithm is called \emph{multi-index PP} (MIPP).
PP algorithms such as FastICA~\cite{NN:Hyvarinen+:2000:Independent} use
a \emph{non-Gaussian index function} (NGIF) to find either a super-Gaussian or sub-Gaussian component.
MIPP uses a family of such NGIFs to find multiple non-Gaussian components
and apply principal component analysis (PCA) to extract a non-Gaussian subspace.
However, MIPP requires us to prepare appropriate NGIFs,
which is not necessarily straightforward in practice.
Furthermore, MIPP requires \emph{pre-whitening} of data, which can be
unreliable when the data covariance matrix is ill-conditioned.

To cope with these problems, MIPP has been extended in various ways.
The method called \emph{iterative metric adaptation for radial kernel functions} (IMAK)
\cite{Springer:Kawanabe+:2007:IMAK}
tries to avoid the manual design of NGIFs
by \emph{learning} the NGIFs from data
in the form of radial kernel functions.
However, this learning part is computationally highly expensive and pre-whitening is still necessary.
\emph{Sparse NGCA} (SNGCA) \cite{IEEE:Diederichs+:2010:SNGCA,Springer:Diederichs+:2013:SNGCA-SDP}
tries to avoid pre-whitening by imposing an appropriate constraint
so that the solution is independent of the data covariance matrix.
However, SNGCA involves \emph{semi-definite programming} which is computationally highly demanding,
and NGIFs still need to be manually designed.

Recently, a novel approach to NGCA called \emph{least-squares NGCA} (LSNGCA) has been proposed
\cite{AISTATS:Hiroaki+:2016:LSNGCA}.
Based on the \emph{gradient} of the log-density function,
LSNGCA constructs a vector that belongs to the non-Gaussian subspace
from each sample.
Then the method of \emph{least-squares log-density gradients} (LSLDG)
\cite{Springer:Cox:1985:Penalty,Springer:Sasaki+:2014:Clustering}
is employed to directly estimate the log-density gradient without density estimation.
Finally, the principal subspace of the set of vectors generated from all samples is extracted
by eigendecomposition.
LSNGCA is computationally efficient and no manual design of NGIFs is involved.
However, it still requires pre-whitening of data.

\begin{table}[t]
  \centering
 \caption{NGCA methods.}
   \label{tab:NGCA}
  \vspace*{-4mm}
  \small
  \begin{tabular}[t]{|@{\ }c@{\ }||@{\ }c@{\ }|@{\ }c@{\ }|@{\ }c@{\ }|@{\ }c@{\ }|@{\ }c@{\ }|c|}
\hline
&
MIPP&
IMAK&
SNGCA&
LSNGCA&
\begin{tabular}{@{}c@{}}
WF-LSNGCA\\
(proposed)\\
\end{tabular}\\\hline\hline
\begin{tabular}{@{}c@{}}
Manual \\
NGIF design\\
\end{tabular}&
Need&
\textbf{No Need}&
Need&
\textbf{No Need}&
\textbf{No Need}\\\hline
\begin{tabular}{@{}c@{}}
Computational \\
efficiency \\
\end{tabular}&
\begin{tabular}{@{}c@{}}
\textbf{Efficient}\\
\textbf{(iterative)}
\end{tabular}&
\begin{tabular}{@{}c@{}}
Inefficient\\
(iterative)
\end{tabular}&
\begin{tabular}{@{}c@{}}
Inefficient\\
(iterative)
\end{tabular}&
\begin{tabular}{@{}c@{}}
\textbf{Efficient}\\
\textbf{(analytic)}
\end{tabular}&
\begin{tabular}{@{}c@{}}
\textbf{Efficient}\\
\textbf{(analytic)}
\end{tabular}\\\hline
Pre-whitening&
Need&
Need&
\textbf{No Need}&
Need&
\textbf{No Need}\\\hline
  \end{tabular}
\vspace{-5mm}
\end{table}

The existing NGCA methods reviewed above are summarized in Table~\ref{tab:NGCA}.
In this paper, we propose a novel NGCA method that is
computationally efficient,
no manual design of NGIFs is involved,
and no pre-whitening is necessary.
Our proposed method is essentially an extention of LSNGCA
so that the covariance of data is implicitly handled without explicit pre-whitening
or explicit constraints.
Through experiments, we demonstrate that our proposed method,
called \emph{whitening-free} LSNGCA (WF-LSNGCA),
performs very well even when the data covariance matrix is ill-conditioned.

\section{Non-Gaussian Component Analysis} 
In this section, we formulate the problem of NGCA and
review the MIPP and LSNGCA methods.

\subsection{Problem Formulation}
\label{ngca_proplem}
Suppose that we are given a set of $d$-dimensional i.i.d.~samples of size $n$,
$\{\bx_i|\bx_i\in\mathbb{R}^d\}_{i=1}^{n}$,
which are generated by the following model:
\begin{align}
	\label{observation}
	\bx_i = \bA\bs_i+\bn_i,
\end{align}
where $\bs_i\in\mathbb{R}^m$ ($m\leq d$) is an $m$-dimensional signal vector
independently generated from an unknown non-Gaussian distribution
(we assume that $m$ is known),
$\bn_i\in\mathbb{R}^d$ is a noise vector
independently generated from a centered Gaussian distribution
with an unknown covariance matrix $\bQ$,
and $\bA\in\mathbb{R}^{d\times m}$ is an unknown mixing matrix of rank $m$.
Under this data generative model, 
probability density function $p(\bx)$ that samples $\{\bx_i\}_{i=1}^{n}$ follow
can be expressed in the following semi-parametric form \cite{JMLR:Blanchard+:2006:MIPP}:
\begin{align}
	\label{eq:semi-para}
	p(\bx) = f(\trans{\bB}\bx)\phi_{\bQ}(\bx),
\end{align}
where $f$ is an unknown smooth positive function on $\mathbb{R}^m$,
$\bB\in\mathbb{R}^{d\times m}$ is an unknown linear mapping,
$\phi_{\bQ}$ is the centered Gaussian density
with the covariance matrix $\bQ$, and $^{\top}$ denotes the transpose.
We note that decomposition \eqref{eq:semi-para} is not unique;
multiple combinations of $\bB$ and $f$ can give the same probability density function.
Nevertheless, the following $m$-dimensional subspace $E$,
called the \emph{non-Gaussian index space}, can be determined uniquely
\cite{Springer:Theis+:2006:Uniqueness}:
\begin{align}
	\label{eq:target-space}
	E = \mathrm{Null}(\trans{\bB})^{\perp}=\mathrm{Range}(\bB),
\end{align}
where $\mathrm{Null}(\trans{\bB})$ denotes the null space of $\trans{\bB}$,
$^{\perp}$ denotes the orthogonal complement,
and $\mathrm{Range}(\bB)$ denotes the column space of $\bB$.

The goal of NGCA is to estimate the non-Gaussian index space $E$ from 
samples $\{\bx_i\}_{i=1}^{n}$.

\subsection{Multi-Index Projection Pursuit (MIPP)}
\label{mipp}
MIPP \cite{JMLR:Blanchard+:2006:MIPP} is the first algorithm of NGCA.

Let us \emph{whiten} the samples $\{\bx_i\}_{i=1}^{n}$
so that their covariance matrix becomes identity:
\[
\by_i:=\bSigma^{-\frac{1}{2}}\bx_i,
\]
where $\bSigma$ is the covariance matrix of $\bx$.
In practice, $\bSigma$ is replaced by the sample covariance matrix.
Then, for an NGIF $h$, the following vector $\bbeta(h)$
was shown to belong to the non-Gaussian index space $E$ \cite{JMLR:Blanchard+:2006:MIPP}:
\[
  \bbeta(h):=\mathbb{E}\left[\by h(\by) - \nabla_{\by}h(\by)\right],
\]
where $\nabla_{\by}$ denotes the differential operator w.r.t.~$\by$
and $\mathbb{E}[\cdot]$ denotes the expectation over $p(\bx)$.
MIPP generates a set of such vectors
from various NGIFs $\left\{h_l\right\}^L_{l=1}$:
\begin{align}
  \label{eq:mipp-beta}
  \widehat{\bbeta}_l:=\frac{1}{n}\sum_{i=1}^{n}\left[\by_i h_l(\by_i) - \nabla_{\by}h_l(\by_i)\right],
\end{align}
where the expectation is estimated by the sample average.
Then $\widehat{\bbeta}_l$ is normalized as
\begin{align}
  \label{eq:mipp-beta-normalization}
  \widehat{\bbeta}_l \leftarrow {\widehat{\bbeta}_l}\Bigg/{\sqrt{\frac{1}{n}\sum^{n}_{i=1}\|\by_i h_l(\by_i)-\nabla_{\by}h_l(\by_i)\|^2-\|\widehat{\bbeta}_l\|^2}},
\end{align}
by which $\|\widehat{\bbeta}_l\|$ is proportional to its signal-to-noise ratio.
Then vectors $\widehat{\bbeta}_l$ with their norm less than a pre-specified threshold $\tau>0$ are eliminated.
Finally, PCA is applied to the remaining vectors $\widehat{\bbeta}_l$ to obtain an estimate of the non-Gaussian index space $E$.

The behavior of MIPP strongly depends on the choice of NGIF $h$.
To improve the performance, MIPP actively searches informative $h$ as follows.
First, the form of $h$ is restricted to
$h(\by) = s(\trans{\bw}\by)$, 
where $\bw\in\mathbb{R}^{d}$ denotes a unit-norm vector and $s$ is a smooth real function.
Then, estimated vector $\widehat{\bbeta}$ is written as
\[
	\widehat{\bbeta} = \frac{1}{n}\sum_{i=1}^{n}\left(\by_i s(\trans{\bw}\by_i)- s'(\trans{\bw}\by_i)\bw \right),
\]
where $s'$ is the derivative of $s$.
This equation is actually equivalent to a single iteration of the PP algorithm called \emph{FastICA}
\cite{NN:Hyvarinen+:1999:Fast}. Based on this fact, the parameter $\bw$ is optimized by iteratively applying the following update rule until convergence:
\[
\bw \leftarrow \frac{\sum_{i=1}^{n}\left(\by_i s(\trans{\bw}\by_i)- s'(\trans{\bw}\by_i)\bw\right)}
{\|\sum_{i=1}^{n}\left(\by_i s(\trans{\bw}\by_i)- s'(\trans{\bw}\by_i)\bw\right)\|}.
\]

The superiority of MIPP has been investigated both theoretically and experimentally
\cite{JMLR:Blanchard+:2006:MIPP}.
However, MIPP has the weaknesses that NGIFs should be manually designed and pre-whitening is necessary.

\subsection{Least-Squares Non-Gaussian Component Analysis (LSNGCA)}
\label{lsngca}
LSNGCA \cite{AISTATS:Hiroaki+:2016:LSNGCA}
is a recently proposed NGCA algorithm that does not require
manual design of NGIFs (Table~\ref{tab:NGCA}).
Here the algorithm of LSNGCA is reviewed, which will be used for
further developing a new method in the next section.

\paragraph{Derivation:}
For whitened samples $\{\by_i\}_{i=1}^{n}$,
the semi-parametric form of NGCA given in Eq.\eqref{eq:semi-para}
can be simplified as
\begin{align}
	\label{eq:whitened-semi-para}
	p(\by) = \widetilde{f}(\trans{\widetilde{\bB}}\by)\phi_{\bI_d}(\by),
\end{align}
where $\widetilde{f}$ is an unknown smooth positive function on $\mathbb{R}^m$ and
$\widetilde{\bB}\in\mathbb{R}^{d\times m}$ is an unknown linear mapping.
Under this simplified semi-parametric form,
the non-Gaussian index space $E$ can be represented as
\[
E=\bSigma^{-\frac{1}{2}}\mathrm{Range}(\widetilde{\bB}).
\] 

Taking the logarithm  and differentiating the both sides of Eq.\eqref{eq:whitened-semi-para}
w.r.t.~$\by$ yield
\begin{align}
	\label{eq:whitened-derivative-log}
	\nabla_{\by} \ln p(\by) + \by 
        &= \widetilde{\bB} \nabla_{\trans{\widetilde{\bB}}\by} \ln \widetilde{f}(\trans{\widetilde{\bB}}\by),
\end{align}
where $\nabla_{\trans{\widetilde{\bB}}\by}$ denotes the differential operator w.r.t.~$\trans{\widetilde{\bB}}\by$.
This implies that
\[
\bu(\by):=\nabla_{\by} \ln p(\by) + \by
\]
belongs to the non-Gaussian index space $E$.
Then applying eigendecomposition to $\sum_{i=1}^{n}\bu(\by_i)\bu(\by_i)^\top$
and extracting the $m$ leading eigenvectors allow us to recover $\mathrm{Range}(\widetilde{\bB})$.
In LSNGCA, the method of \emph{least-squares log-density gradients} (LSLDG) \cite{Springer:Cox:1985:Penalty,Springer:Sasaki+:2014:Clustering}
is used to estimate the log-density gradient $\nabla_{\by} \ln p(\by)$ included in $\bu(\by)$,
which is briefly reviewed below.

\paragraph{LSLDG:}
\label{subsec:LSLDG}
Let $\partial_j$ denote the differential operator w.r.t.~the $j$-th element of $\by$.
LSLDG fits a model $g^{(j)}(\by)$ to $\partial_j\ln p(\by)$, the $j$-th element of log-density gradient $\nabla_{\by}\ln p(\by)$,
under the squared loss:
\begin{align}
  J(g^{(j)}) &:= \mathbb{E}[(g^{(j)}(\by)-\partial_j\ln p(\by))^2]-\mathbb{E}[(\partial_j\ln p(\by))^2]\nonumber\\
&\phantom{:}= \mathbb{E}[g^{(j)}(\by)^2]-2\mathbb{E}[g^{(j)}(\by)\partial_j\ln p(\by)].
\label{eq:lsldg-approx-loss}
\end{align}
The second term in Eq.\eqref{eq:lsldg-approx-loss} yields
\begin{align*}
  \mathbb{E}[g^{(j)}(\by)\partial_j\ln p(\by)]
  &=\int g^{(j)}(\by)(\partial_j\ln p(\by))p(\by)\mathrm{d}\by
  =\int g^{(j)}(\by)\partial_jp(\by)\mathrm{d}\by\\
  &=-\int \partial_jg^{(j)}(\by)p(\by)\mathrm{d}\by
  =-\mathbb{E}[\partial_jg^{(j)}(\by)],
\end{align*}
where the second-last equation follows from \emph{integration by parts}
under the assumption $\lim_{|y^{(j)}|\rightarrow\infty}g^{(j)}(\by)p(\by)=0$.
Then sample approximation yields
\begin{align}
  J(g^{(j)}) &= \mathbb{E}[g^{(j)}(\by)^2-2\partial_jg^{(j)}(\by)]
\label{eq:lsldg-approx-loss2}
  \approx
  \frac{1}{n}\sum_{i=1}^{n}[g^{(j)}(\by_i)^2 + 2 \partial_j g^{(j)}(\by_i)].
\end{align}
As a model of the log-density gradient, LSLDG uses a linear-in-parameter form:
\begin{align}
  g^{(j)}(\by)=\sum_{k=1}^{b}\theta_{k,j}\psi_{k,j}(\by)=\trans{\btheta_{j}}\bpsi_{j}(\by),
\label{eq:model}
\end{align}
where $b$ denotes the number of basis functions, $\btheta_{j}:=\trans{(\theta_{1,j},\ldots,\theta_{b,j})}$ 
is a parameter vector to be estimated, and $\bpsi_{j}(\by):=\trans{(\psi_{1,j}(\by),\ldots,\psi_{b,j}(\by))}$ is a basis function vector.
The parameter vector $\btheta_j$ is learned by solving the following regularized empirical optimization problem:
\[  
\widehat{\btheta}_j = \argmin_{\btheta_j}\left[\trans{\btheta_j}\widehat{\bG}_j\btheta_j+2\trans{\btheta_j}\widehat{\bh}_j+\lambda_j\|\btheta_j\|^2\right],
\]
where $\lambda_j>0$ is the regularization parameter,
\begin{align*}
	\widehat{\bG}_j&=\frac{1}{n}\sum_{i=1}^{n}\bpsi_j(\by_i)\trans{\bpsi_j(\by_i)},~~
	\widehat{\bh}_j=\frac{1}{n}\sum_{i=1}^{n}\partial_j \bpsi_j(\by_i).
\end{align*}
This optimization problem can be analytically solved as 
\[
	\widehat{\btheta}_j = -\left(\widehat{\bG}_j+\lambda_j\bI_b\right)^{-1}\widehat{\bh}_j,
\]
where $\bI_b$ is the $b$-by-$b$ identity matrix.
Finally, an estimator of the log-density gradient $g^{(j)}(\by)$ is obtained as
\[
  \widehat{g}^{(j)}(\by) = \trans{\widehat{\btheta}_j}\bpsi_j(\by).
\]
All tuning parameters such as the regularization parameter $\lambda_j$ 
and parameters included in the basis function $\psi_{k,j}(\by)$
can be systematically chosen based on cross-validation w.r.t.~Eq.\eqref{eq:lsldg-approx-loss2}.

\section{Whitening-Free LSNGCA}

In this section, we propose a novel NGCA algorithm that does not involve pre-whitening.
A pseudo-code of the proposed method,
which we call \emph{whitening-free LSNGCA} (WF-LSNGCA),
is summarized in Algorithm~\ref{alg:wflsngca}.

\begin{algorithm}[t]
\caption{Pseudo-code of WF-LSNGCA.}
\label{alg:wflsngca}
\begin{algorithmic}[1]
\INPUT Element-wise standardized data samples: $\left\{\bx_i\right\}_{i=1}^{n}$.
\STATE Obtain an estimate $\widehat{\bv}(\bx)$
of $\bv(\bx) = \nabla_{\bx}\ln p(\bx) - \nabla_{\bx}^2\ln p(\bx)\bx$ by the method described in Section~\ref{subsec:bv(bx)}.
\STATE Apply eigendecomposition to $\sum_{i=1}^n\widehat{\bv}(\bx_i)\trans{\widehat{\bv}(\bx_i)}$
and extract the $m$ leading eigenvectors as an orthonormal basis of
non-Gaussian index space $E$.
\end{algorithmic}
\end{algorithm}

\subsection{Derivation}
Unlike LSNGCA which used the simplified semi-parametric form \eqref{eq:whitened-semi-para},
we directly use the original semi-parametric form \eqref{eq:semi-para}
without whitening.
Taking the logarithm and differentiating the both sides of Eq.\eqref{eq:semi-para} w.r.t.~$\bx$ yield
\begin{align}
	\label{eq:derivative-log}
	\nabla_{\bx}\ln p(\bx) + \inv{\bQ}\bx = \bB \nabla_{\trans{\bB}\bx} \ln f(\trans{\bB}\bx),
\end{align}
where $\nabla_{\bx}$ denotes the derivative w.r.t.~$\bx$ and $\nabla_{\trans{\bB}\bx}$ denotes the derivative w.r.t.~$\trans{\bB}\bx$.
Further taking the derivative of Eq.\eqref{eq:derivative-log} w.r.t.~$\bx$ yields
\begin{align}
	\label{eq:derivative2-log}
        \inv{\bQ} &= -\nabla_{\bx}^2\ln p(\bx) + \bB \nabla_{\trans{\bB}\bx}^2\ln f(\trans{\bB}\bx) \trans{\bB},
\end{align}
where $\nabla_{\bx}^2$ denotes the second derivative w.r.t.~$\bx$.
Substituting Eq.\eqref{eq:derivative2-log} back into Eq.\eqref{eq:derivative-log} yields
\begin{align}
&\nabla_{\bx} \ln p(\bx) -\nabla_{\bx}^2\ln p(\bx)\bx
= \bB \left(\nabla_{\trans{\bB}\bx} \ln f(\trans{\bB}\bx)-\nabla_{\trans{\bB}\bx}^2\ln f(\trans{\bB}\bx)\trans{\bB}\bx\right).
	\label{eq:main}
\end{align}
This implies that
\[
  \bv(\bx):=\nabla_{\bx} \ln p(\bx)-\nabla_{\bx}^2\ln p(\bx)\bx
\]
belongs to the non-Gaussian index space $E$.
Then we apply eigendecomposition to $\sum_{i=1}^n{\bv}(\bx_i)\trans{{\bv}(\bx_i)}$
and extract the $m$ leading eigenvectors as an orthonormal basis of
non-Gaussian index space $E$.

Now the remaining task is to approximate $\bv(\bx)$ from data,
which is discussed below.

\subsection{Estimation of $\bv(\bx)$}\label{subsec:bv(bx)}
Let $v^{(j)}(\bx)$ be the $j$-th element of $\bv(\bx)$:
\[
v^{(j)}(\bx)=\partial_j\ln p(\bx)-\trans{\left(\nabla_{\bx}\partial_j \ln p(\bx)\right)}\bx.
\]
To estimate $v^{(j)}(\bx)$,
let us fit a model $w^{(j)}(\bx)$ to it under the squared loss:
\begin{align}
  R(w^{(j)})
  &:= \mathbb{E}[(w^{(j)}(\bx)-v^{(j)}(\bx))^2]-\mathbb{E}[v^{(j)}(\bx)^2]\nonumber\\
&\phantom{:}= \mathbb{E}[w^{(j)}(\bx)^2]- 2\mathbb{E}[w^{(j)}(\bx)v^{(j)}(\bx)]\nonumber\\
&\phantom{:}= \mathbb{E}[w^{(j)}(\bx)^2]- 2\mathbb{E}[w^{(j)}(\bx)\partial_j\ln p(\bx)]
 + 2\mathbb{E}[w^{(j)}(\bx)\trans{\left(\nabla_{\bx}\partial_j \ln p(\bx)\right)}\bx].
\label{R}
\end{align}
The second term in Eq.\eqref{R} yields
\begin{align*}
  \mathbb{E}[w^{(j)}(\bx)\partial_j\ln p(\bx)]
  &=\int w^{(j)}(\bx)(\partial_j\ln p(\bx))p(\bx)\mathrm{d}\bx
  =\int w^{(j)}(\bx)\partial_j p(\bx)\mathrm{d}\bx\\
  &=-\int \partial_jw^{(j)}(\bx) p(\bx)\mathrm{d}\bx
  =-\mathbb{E}[\partial_jw^{(j)}(\bx)],
\end{align*}
where the second-last equation follows from \emph{integration by parts}
under the assumption $\lim_{|x^{(j)}|\rightarrow\infty}w^{(j)}(\bx)p(\bx)=0$.
$\partial_j \ln p(\bx)$ included in the third term in Eq.\eqref{R}
may be replaced with the LSLDG estimator $\widehat{g}^{(j)}(\bx)$
reviewed in Section~\ref{subsec:LSLDG}. Note that the LSLDG estimator is obtained with non-whitened data $\bx$ in this method. Then we have
\begin{align}
  R(w^{(j)})
  &\approx
\mathbb{E}[w^{(j)}(\bx)^2+2\partial_jw^{(j)}(\bx)
+ 2w^{(j)}(\bx)\trans{(\nabla_{\bx}\widehat{g}^{(j)}(\bx))}\bx]
\label{R-LSLDG}
\\
  &\approx\frac{1}{n}\sum_{i=1}^n[w^{(j)}(\bx_i)^2+2\partial_jw^{(j)}(\bx_i)
  +2w^{(j)}(\bx_i) \trans{(\nabla_{\bx}\widehat{g}^{(j)}(\bx_i))}\bx_i].
\nonumber
\end{align}
Here, let us employ the following linear-in-parameter model as $w^{(j)}(\bx)$:
\begin{align}
  w^{(j)}(\bx) := \sum_{k=1}^{t}\alpha_{k,j}\varphi_{k,j}(\bx)=\trans{\balpha_{j}}\bvarphi_{j}(\bx),
  \label{eq:proposed-model}
\end{align}
where $t$ denotes the number of basis functions, $\balpha_{j}:=\trans{(\alpha_{1,j},\ldots,\alpha_{t,j})}$ is a parameter vector to be estimated,
and $\bvarphi_{j}(\bx):=\trans{(\varphi_{1,j}(\bx),\ldots,\varphi_{t,j}(\bx))}$ is a basis function vector.
The parameter vector $\balpha_{j}$ is learned  by minimizing the following regularized empirical optimization problem:
\[
	\widehat{\balpha}_j = \argmin_{\balpha_j}\left[\trans{\balpha}_j\widehat{\bS}_j\balpha_j+2\trans{\balpha}_j\widehat{\bt}_j(\bx)+\gamma_j\|\balpha_j\|^2\right],
\]
where $\gamma_j>0$ is the regularization parameter,
\begin{align*}
	\widehat{\bS}_j&=\frac{1}{n}\sum_{i=1}^{n}\bvarphi_j(\bx_i)\trans{\bvarphi_j(\bx_i)},\\
	~~
        \widehat{\bt}_j&=\frac{1}{n}\sum_{i=1}^{n}\left(\partial_j \bvarphi_j(\bx_i)+ \bvarphi_j(\bx_i)
          \trans{\left(\nabla_{\bx}\widehat{g}^{(j)}(\bx_i)\right)}\bx_i\right).
\end{align*}
This optimization problem can be analytically solved as 
\[
	\widehat{\balpha}_j = -\left(\widehat{\bS}_j+\gamma_j\bI_b\right)^{-1}\widehat{\bt}_j.
\]
Finally, an estimator of $v^{(j)}(\bx)$ is obtained as
\[
	\widehat{v}^{(j)}(\bx) = \trans{\widehat{\balpha}}_j\bvarphi_j(\bx).
\]
All tuning parameters such as the regularization parameter $\gamma_j$
and parameters included in the basis function $\varphi_{k,j}(\by)$
can be systematically chosen based on cross-validation w.r.t.~Eq.\eqref{R-LSLDG}.


\subsection{Theoretical Analysis}

Here, we investigate the convergence rate of WF-LSNGCA in a parametric setting. 

Let $g^*(\bx)$ be the optimal estimate to $\nabla_{\bx}\ln p(\bx)$ given by LSLDG based on the linear-in-parameter model $g(\bx)$, and let
\begin{align*}
  \bS_j^*
  &=\mathbb{E}\left[\boldsymbol{\varphi}_j(\bx) \boldsymbol{\varphi}_j(\bx)^{\top}\right],
~~  \bt_j^*
  =\mathbb{E}\left[\partial_j \bvarphi_j(\bx)
  + \bvarphi_j(\bx) \trans{\left(\nabla_{\bx}g^{*(j)}(\bx)\right)}\bx \right],
\\
  \balpha_j^* &= \argmin\nolimits_{\balpha}
  \left\{\balpha^\top\bS_j^*\balpha +2\balpha^{\top}\bt_j^*
  +\gamma_j^*\balpha^\top\balpha\right\},
~~ w^{*(j)}(\bx) = \balpha_j^{*\top} \boldsymbol{\varphi}_j(\bx),
\end{align*}
where $(\bS_j^*+\gamma_j^*\bI_b)$ must be strictly positive definite. In fact, $\bS_j^*$ should already be strictly positive definite, and thus $\gamma_j^*=0$ is also allowed in our theoretical analysis.

We have the following theorem (its proof is given in Section~\ref{sec:proof-theorem-convergence}):
\begin{theorem}
  \label{thm:main-convergence}%
  As $n\to\infty$, for any $\bx$,
  \begin{equation*}
  \|\widehat{\bv}(\bx)-\bw^*(\bx)\|_2 =\mathcal{O}_p\left(n^{-1/2}\right),
  \end{equation*}
  provided that $\gamma_j$ for all $j$ converge in $\mathcal{O}(n^{-1/2})$ to $\gamma_j^*$, i.e.,
  $\lim_{n\to\infty}n^{1/2}|\gamma_j-\gamma_j^*|<\infty$.
\end{theorem}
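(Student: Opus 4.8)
The plan is to reduce the claim to a finite-dimensional perturbation bound on the fitted coefficient vectors and then to establish their $\sqrt{n}$-consistency by standard parametric arguments. Because both the model and its target are linear in the fixed basis, for each coordinate $j$ we have $\widehat{v}^{(j)}(\bx)-w^{*(j)}(\bx)=\trans{(\widehat{\balpha}_j-\balpha_j^*)}\bvarphi_j(\bx)$, whence $|\widehat{v}^{(j)}(\bx)-w^{*(j)}(\bx)|\le\|\widehat{\balpha}_j-\balpha_j^*\|\,\|\bvarphi_j(\bx)\|$. Since $\bx$ is fixed and the sum over $j=1,\dots,d$ is finite, it suffices to prove $\|\widehat{\balpha}_j-\balpha_j^*\|=\mathcal{O}_p(n^{-1/2})$ for every $j$, after which $\|\widehat{\bv}(\bx)-\bw^*(\bx)\|_2=\mathcal{O}_p(n^{-1/2})$ follows immediately.

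First I would write $\widehat{\balpha}_j=-(\widehat{\bS}_j+\gamma_j\bI)^{-1}\widehat{\bt}_j$ and $\balpha_j^*=-(\bS_j^*+\gamma_j^*\bI)^{-1}\bt_j^*$ and expand the difference with the resolvent identity $\inv{\bM}-\inv{\bN}=\inv{\bM}(\bN-\bM)\inv{\bN}$. The hypothesis that $\bS_j^*+\gamma_j^*\bI$ is strictly positive definite keeps $(\bS_j^*+\gamma_j^*\bI)^{-1}$ bounded, and the same then holds for $(\widehat{\bS}_j+\gamma_j\bI)^{-1}$ with probability tending to one once the perturbations are shown to vanish. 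The resulting expression is a sum of terms proportional to $\widehat{\bS}_j-\bS_j^*$, to $\gamma_j-\gamma_j^*$, and to $\widehat{\bt}_j-\bt_j^*$. The first is $\mathcal{O}_p(n^{-1/2})$ because $\widehat{\bS}_j$ is an empirical mean obeying the central limit theorem, and the second is $\mathcal{O}(n^{-1/2})$ by the stated hypothesis on $\gamma_j$.

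The crux is the rate for $\widehat{\bt}_j-\bt_j^*$, since $\widehat{\bt}_j$ contains the plug-in LSLDG estimate $\nabla_{\bx}\widehat{g}^{(j)}$. I would split it as $(\widehat{\bt}_j-\tilde{\bt}_j)+(\tilde{\bt}_j-\bt_j^*)$, where $\tilde{\bt}_j$ is the same empirical mean formed with the population-optimal $g^{*(j)}$ in place of $\widehat{g}^{(j)}$; the second piece is a centered empirical mean and is $\mathcal{O}_p(n^{-1/2})$. For the first piece, linearity of the LSLDG model is decisive: the scalar $\trans{(\nabla_{\bx}\widehat{g}^{(j)}(\bx_i)-\nabla_{\bx}g^{*(j)}(\bx_i))}\bx_i$ equals $\trans{\bc_j(\bx_i)}(\widehat{\btheta}_j-\btheta_j^*)$ for a fixed $b$-vector $\bc_j(\bx_i)$ built from the coordinate derivatives of $\bpsi_j$ and from $\bx_i$, so that $\widehat{\bt}_j-\tilde{\bt}_j=\left(\frac1n\sum_i\bvarphi_j(\bx_i)\trans{\bc_j(\bx_i)}\right)(\widehat{\btheta}_j-\btheta_j^*)$. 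The sample-average matrix converges to a finite limit by the law of large numbers, and $\widehat{\btheta}_j-\btheta_j^*=\mathcal{O}_p(n^{-1/2})$ by exactly the same resolvent-perturbation argument applied to the LSLDG solution $\widehat{\btheta}_j=-(\widehat{\bG}_j+\lambda_j\bI)^{-1}\widehat{\bh}_j$; hence the product is $\mathcal{O}_p(n^{-1/2})$.

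I expect the plug-in coupling through LSLDG to be the main obstacle, because $\widehat{g}^{(j)}$ and $\widehat{\bt}_j$ are built from the same samples. The factorization above is what tames it: the first-stage error enters the second stage only linearly and multiplied by a bounded, convergent matrix, so the two $\sqrt{n}$-rates compose additively instead of degrading, and no independent sample splitting is needed. The remaining work is bookkeeping, namely assembling moment and boundedness conditions on $\bvarphi_j$, $\bpsi_j$, their coordinate derivatives, and on the products $\bc_j(\bx_i)$ and $\bvarphi_j(\bx_i)\trans{\bc_j(\bx_i)}$, so that every empirical mean in sight satisfies the central limit theorem and every inverse stays bounded with probability approaching one.
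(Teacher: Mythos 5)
Your proof is correct, and its core differs from the paper's in a worthwhile way. The paper reaches $\|\widehat{\balpha}_j-\balpha_j^*\|_2=\mathcal{O}(\|\widehat{\bS}_j-\bS_j^*\|_{\mathrm{Fro}}+\|\widehat{\bt}_j-\bt_j^*\|_2+|\gamma_j-\gamma_j^*|)$ through the machinery of perturbed optimization: a second-order growth condition from strong convexity of $R_j^*$, Lipschitz stability of the perturbed objective near $\balpha_j^*$, and Proposition~6.1 of Bonnans--Shapiro. You get the same bound more elementarily by exploiting the closed-form solutions and the resolvent identity; since the objectives here are exactly quadratic, nothing is lost, though the paper's route would survive a non-quadratic loss. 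The subsequent accounting is shared: $\widehat{\bS}_j-\bS_j^*$ by the CLT, $\gamma_j-\gamma_j^*$ by hypothesis, and the plug-in term in $\widehat{\bt}_j$ as the crux. There your treatment is arguably tighter than the paper's. The paper asserts that $\frac1n\sum_i\bvarphi_j(\bx_i)\trans{(\nabla_{\bx}\widehat{g}^{(j)}(\bx_i)-\nabla_{\bx}g^{*(j)}(\bx_i))}\bx_i=\mathcal{O}_p(n^{-1/2})$ directly from the pointwise $\mathcal{O}_p(n^{-1/2})$ convergence of $\nabla_{\bx}\widehat{g}^{(j)}$ cited from the LSNGCA paper, glossing over the fact that $\widehat{g}^{(j)}$ is built from the same samples being averaged. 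Your factorization $\bigl(\frac1n\sum_i\bvarphi_j(\bx_i)\trans{\bc_j(\bx_i)}\bigr)(\widehat{\btheta}_j-\btheta_j^*)$ makes the dependence harmless explicitly: the first-stage error enters only through the finite-dimensional coefficient gap, multiplied by a sample-average matrix controlled by the law of large numbers. Two caveats to note: your re-derivation of $\widehat{\btheta}_j-\btheta_j^*=\mathcal{O}_p(n^{-1/2})$ implicitly needs the LSLDG regularizers $\lambda_j$ to converge at rate $n^{-1/2}$ as well (the paper inherits this silently by citing the earlier result), and the moment/boundedness bookkeeping you defer is genuinely needed but unproblematic for Gaussian-derivative bases, matching the paper's Step~4.
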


Theorem~\ref{thm:main-convergence} is based on the theory of perturbed optimizations \cite{bonnans96,bonnans98} as well as the convergence of LSLDG shown in \cite{AISTATS:Hiroaki+:2016:LSNGCA}. It guarantees that for any $\bx$, the estimate $\widehat{\bv}(\bx)$ in WF-LSNGCA converges to the optimal estimate $\bw^*(\bx)$ based on the linear-in-parameter model $w(\bx)$, and it achieves the optimal parametric convergence rate $\mathcal{O}_p(n^{-1/2})$. Note that Theorem~\ref{thm:main-convergence} deals only with the estimation error, and the approximation error is not taken into account. Indeed, approximation errors exist in two places, from $\bw^*(\bx)$ to $\bv(\bx)$ in WF-LSNGCA itself and from $g^*(\bx)$ to $\nabla_{\bx}\ln p(\bx)$ in the plug-in LSLDG estimator. Since the original LSNGCA also relies on LSLDG, it cannot avoid the approximation error introduced by LSLDG. For this reason, the convergence of WF-LSNGCA is expected to be as good as LSNGCA.

Theorem~\ref{thm:main-convergence} is basically a theoretical guarantee that is similar to \emph{Part One in the proof of Theorem 1} in \cite{AISTATS:Hiroaki+:2016:LSNGCA}. Hence, based on Theorem~\ref{thm:main-convergence}, we can go along the line of \emph{Part Two in the proof of Theorem 1} in \cite{AISTATS:Hiroaki+:2016:LSNGCA} and obtain the following corollary.

\begin{corollary}
 For eigendecomposition, define matrices $\widehat{\mathbf{\Gamma}}=\frac{1}{n}\sum_{i=1}^n\widehat{\bv}(\bx_i)\trans{\widehat{\bv}(\bx_i)}$ and $\mathbf{\Gamma}^*=\mathbb{E}[\bw^*(\bx)\trans{\bw^*(\bx)}]$.
Given the estimated subspace $\widehat{E}$ based on $n$ samples and the optimal estimated subspace $E^*$ based on infinite data, denote by $\widehat{\mathbf{E}}\in\mathbb{R}^{d\times m}$ the matrix form of an arbitrary orthonormal basis of $\widehat{E}$ and by $\mathbf{E}^*\in\mathbb{R}^{d\times m}$ that of $E^*$. Define the distance between subspaces as
  \begin{equation*}
  \mathcal{D}(\widehat{E},E^*)
  =\inf\nolimits_{\widehat{\mathbf{E}},\mathbf{E}^*}
  \|\widehat{\mathbf{E}}-\mathbf{E}^*\|_\mathrm{Fro},
  \end{equation*}
  where $\|\cdot\|_\mathrm{Fro}$ stands for the Frobenius norm. Then, as $n\to\infty$,
  \begin{equation*}
  \mathcal{D}(\widehat{E},E^*)
  =\mathcal{O}_p\left(n^{-1/2}\right),
  \end{equation*}
  provided that
 $\gamma_j$ for all $j$ converge in $\mathcal{O}(n^{-1/2})$ to $\gamma_j^*$
and $\bvarphi_j(\bx)$ are well-chosen basis functions such that the first $m$ eigenvalues of $\mathbf{\Gamma}^*$ are neither $0$ nor $+\infty$.
\end{corollary}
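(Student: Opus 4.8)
The plan is to reduce the subspace-distance bound to a matrix-perturbation bound via a Davis--Kahan $\sin\Theta$ argument, and then to control the perturbation $\|\widehat{\mathbf{\Gamma}}-\mathbf{\Gamma}^*\|$ using Theorem~\ref{thm:main-convergence} together with a law-of-large-numbers argument for the empirical second-moment matrix. Since both $\widehat{\mathbf{\Gamma}}$ and $\mathbf{\Gamma}^*$ are symmetric positive semidefinite and $\widehat{E},E^*$ are their respective top-$m$ eigenspaces, the Davis--Kahan theorem yields $\mathcal{D}(\widehat{E},E^*)\le c\,\|\widehat{\mathbf{\Gamma}}-\mathbf{\Gamma}^*\|/\delta$ for a constant $c$, where $\delta$ is the gap between the $m$-th and $(m+1)$-th eigenvalues of $\mathbf{\Gamma}^*$. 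This is exactly the structure of \emph{Part Two in the proof of Theorem 1} in \cite{AISTATS:Hiroaki+:2016:LSNGCA}, so the task reduces to establishing $\|\widehat{\mathbf{\Gamma}}-\mathbf{\Gamma}^*\|=\mathcal{O}_p(n^{-1/2})$ together with a positive, $n$-independent $\delta$.

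First I would insert the intermediate matrix $\widetilde{\mathbf{\Gamma}}:=\frac{1}{n}\sum_{i=1}^n\bw^*(\bx_i)\trans{\bw^*(\bx_i)}$ and split $\|\widehat{\mathbf{\Gamma}}-\mathbf{\Gamma}^*\|\le\|\widehat{\mathbf{\Gamma}}-\widetilde{\mathbf{\Gamma}}\|+\|\widetilde{\mathbf{\Gamma}}-\mathbf{\Gamma}^*\|$. The second term is the deviation of an i.i.d.\ sample average of $\bw^*(\bx)\trans{\bw^*(\bx)}$ from its expectation $\mathbf{\Gamma}^*$, so under finite fourth moments of $\bw^*(\bx)$ the central limit theorem gives $\|\widetilde{\mathbf{\Gamma}}-\mathbf{\Gamma}^*\|=\mathcal{O}_p(n^{-1/2})$ entrywise, hence in Frobenius norm. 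For the first term I would use the identity $\widehat{\bv}\trans{\widehat{\bv}}-\bw^*\trans{\bw^*}=(\widehat{\bv}-\bw^*)\trans{\widehat{\bv}}+\bw^*\trans{(\widehat{\bv}-\bw^*)}$ to bound each summand by $\|\widehat{\bv}(\bx_i)-\bw^*(\bx_i)\|(\|\widehat{\bv}(\bx_i)\|+\|\bw^*(\bx_i)\|)$.

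The key step exploits that the estimation error is linear in the parameters: $\widehat{v}^{(j)}(\bx)-w^{*(j)}(\bx)=\trans{(\widehat{\balpha}_j-\balpha_j^*)}\bvarphi_j(\bx)$. Consequently the averaged squared error collapses into a quadratic form, $\frac{1}{n}\sum_{i=1}^n\|\widehat{\bv}(\bx_i)-\bw^*(\bx_i)\|^2=\sum_{j=1}^d\trans{(\widehat{\balpha}_j-\balpha_j^*)}\widehat{\bS}_j(\widehat{\balpha}_j-\balpha_j^*)$, which is $\mathcal{O}_p(n^{-1})$ because $\widehat{\bS}_j\to\bS_j^*$ and, by the proof of Theorem~\ref{thm:main-convergence}, $\|\widehat{\balpha}_j-\balpha_j^*\|=\mathcal{O}_p(n^{-1/2})$. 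The factor $\frac{1}{n}\sum_i\|\widehat{\bv}(\bx_i)\|^2$ is $\mathcal{O}_p(1)$ by the same convergence $\widehat{\bS}_j\to\bS_j^*$ and boundedness of $\widehat{\balpha}_j$, so Cauchy--Schwarz gives $\|\widehat{\mathbf{\Gamma}}-\widetilde{\mathbf{\Gamma}}\|=\mathcal{O}_p(n^{-1/2})$ and therefore $\|\widehat{\mathbf{\Gamma}}-\mathbf{\Gamma}^*\|=\mathcal{O}_p(n^{-1/2})$. The eigengap is handled by the hypothesis that the first $m$ eigenvalues of $\mathbf{\Gamma}^*$ are neither $0$ nor $+\infty$: this makes $\delta=\lambda_m(\mathbf{\Gamma}^*)-\lambda_{m+1}(\mathbf{\Gamma}^*)$ a strictly positive constant independent of $n$ (in the well-specified case $\bw^*\in E$ one even has $\lambda_{m+1}=0$), and Davis--Kahan then delivers $\mathcal{D}(\widehat{E},E^*)=\mathcal{O}_p(n^{-1/2})$.

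The main obstacle is controlling $\|\widehat{\mathbf{\Gamma}}-\widetilde{\mathbf{\Gamma}}\|$ rather than merely the pointwise error of Theorem~\ref{thm:main-convergence}, because $\widehat{\bv}$ is estimated from the very sample $\{\bx_i\}$ over which the average is formed, so the summands are not independent and a naive pointwise bound does not suffice. The linear-in-parameter structure of $w^{(j)}$ is what resolves this cleanly, reducing the data-dependent error to a single quadratic form weighted by $\widehat{\bS}_j$ whose rate is inherited directly from the parameter convergence in Theorem~\ref{thm:main-convergence}. The remaining care is to verify the moment conditions (finite fourth moments of $\bw^*(\bx)$) needed for the central limit theorem on $\widetilde{\mathbf{\Gamma}}-\mathbf{\Gamma}^*$, and to confirm that the ``well-chosen basis functions'' assumption indeed yields the separation $\lambda_m(\mathbf{\Gamma}^*)>\lambda_{m+1}(\mathbf{\Gamma}^*)$ required for a nondegenerate top-$m$ eigenspace.
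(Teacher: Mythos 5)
Your proposal is correct and follows essentially the route the paper intends: the paper gives no detailed argument for the corollary, deferring to \emph{Part Two of the proof of Theorem 1} in the LSNGCA reference, which is exactly the eigenspace-perturbation (Davis--Kahan) reduction to $\|\widehat{\mathbf{\Gamma}}-\mathbf{\Gamma}^*\|=\mathcal{O}_p(n^{-1/2})$ plus the eigengap condition that you carry out. (One minor remark: the dependence between $\widehat{\bv}$ and the sample being averaged, which you flag as the main obstacle, is already neutralized because $\|\widehat{\balpha}_j-\balpha_j^*\|_2=\mathcal{O}_p(n^{-1/2})$ does not depend on $\bx$ and $\|\bvarphi_j(\bx)\|_2$ is uniformly bounded, so the pointwise bound of Theorem~\ref{thm:main-convergence} holds uniformly in $\bx$; your quadratic-form argument via $\widehat{\bS}_j$ is valid but not strictly necessary.)
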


\subsection{Proof of Theorem~\ref{thm:main-convergence}}
\label{sec:proof-theorem-convergence}

\paragraph{Step 1.}
First of all, we establish the growth condition (see \emph{Definition~6.1} in \cite{bonnans98}). Denote the expected and empirical objective functions by
\begin{align*}
  R_j^*(\balpha) &=
  \balpha^{\top}\bS_j^*\balpha +2\balpha^{\top}\bt_j^* +\gamma_j^*\balpha^{\top}\balpha,\\
  \widehat{R}_j(\balpha) &=
  \balpha^{\top}\widehat{\bS}_j\balpha +2\balpha^{\top}\widehat{\bt}_j +\gamma_j\balpha^{\top}\balpha.
\end{align*}
Then $\balpha_j^* = \argmin\nolimits_{\balpha}R_j^*(\balpha)$, $\widehat{\balpha}_j = \argmin\nolimits_{\balpha}\widehat{R}_j(\balpha)$, and we have

\begin{lemma}
  \label{lem:grow}%
  Let $\epsilon_j$ be the smallest eigenvalue of $(\bS_j^*+\gamma_j^*\bI_b)$, then the following second-order growth condition holds
  \begin{equation*}
    R_j^*(\balpha) \ge
    R_j^*(\balpha_j^*)+\epsilon_j\|\balpha-\balpha_j^*\|_2^2.
  \end{equation*}
\end{lemma}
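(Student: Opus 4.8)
The plan is to treat $R_j^*$ as a strictly convex quadratic and obtain the growth bound by an exact completion of the square, so no Taylor remainder or approximation enters. First I would absorb the regularization into the quadratic term by writing $\bM_j := \bS_j^* + \gamma_j^*\bI_b$, giving $R_j^*(\balpha) = \balpha^\top\bM_j\balpha + 2\balpha^\top\bt_j^*$. By the standing assumption, $\bM_j$ is symmetric and strictly positive definite, so its smallest eigenvalue $\epsilon_j$ is strictly positive; the stationarity condition $\nabla R_j^*(\balpha) = 2(\bM_j\balpha + \bt_j^*) = \bzero$ then has the unique solution $\balpha_j^* = -\inv{\bM_j}\bt_j^*$, confirming that the minimizer is well posed.

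Next I would expand $R_j^*$ about $\balpha_j^*$. Setting $\balpha = \balpha_j^* + \bbeta$ and using the identity $\bM_j\balpha_j^* = -\bt_j^*$ (equivalently $\balpha_j^{*\top}\bM_j = -\trans{\bt_j^*}$ by symmetry), the cross term $2\balpha_j^{*\top}\bM_j\bbeta + 2\bbeta^\top\bt_j^*$ cancels identically, leaving the exact identity
\[
  R_j^*(\balpha) = R_j^*(\balpha_j^*) + \bbeta^\top\bM_j\bbeta,
  \qquad \bbeta := \balpha - \balpha_j^*.
\]
This is simply the second-order expansion of a quadratic, which is exact. Finally I would lower-bound the curvature term: since $\bM_j$ is symmetric with smallest eigenvalue $\epsilon_j$, the Rayleigh-quotient inequality gives $\bbeta^\top\bM_j\bbeta \ge \epsilon_j\|\bbeta\|_2^2$, and substituting $\bbeta = \balpha - \balpha_j^*$ yields exactly the claimed bound $R_j^*(\balpha) \ge R_j^*(\balpha_j^*) + \epsilon_j\|\balpha-\balpha_j^*\|_2^2$.

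There is no real obstacle here: the statement reduces to an exact algebraic identity followed by a one-line eigenvalue bound, and the only hypothesis used is the strict positive definiteness of $\bS_j^* + \gamma_j^*\bI_b$, which guarantees both $\epsilon_j > 0$ and the existence and uniqueness of $\balpha_j^*$. The one point worth flagging is that the \emph{same} matrix $\bM_j$ supplies both the minimizer and the curvature constant $\epsilon_j$, so the growth constant obtained is the tightest one for this quadratic. This matters downstream: the second-order growth condition in the sense of \emph{Definition~6.1} of \cite{bonnans98} is precisely what is needed in Step~2 onward to control the stability of $\widehat{\balpha}_j$ under the perturbation from $(\bS_j^*,\bt_j^*,\gamma_j^*)$ to $(\widehat{\bS}_j,\widehat{\bt}_j,\gamma_j)$, and hence to establish the $\mathcal{O}_p(n^{-1/2})$ rate in Theorem~\ref{thm:main-convergence}.
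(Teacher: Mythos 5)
Your proof is correct and follows essentially the same route as the paper: the paper also expands $R_j^*$ exactly to second order around $\balpha_j^*$ (invoking strong convexity with the Hessian $2(\bS_j^*+\gamma_j^*\bI_b)$ and the optimality condition $\nabla R_j^*(\balpha_j^*)=\bzero$) and then lower-bounds the quadratic form by its smallest eigenvalue $\epsilon_j$. Your explicit completion of the square is just a slightly more elementary phrasing of the same argument.
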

\begin{proof}
  $R_j^*(\balpha)$ must be strongly convex with parameter at least $2\epsilon_j$. Hence,
  \begin{align*}
    R_j^*(\balpha)
    &\ge R_j^*(\balpha_j^*)
    +(\nabla R_j^*(\balpha_j^*))^\top(\balpha-\balpha_j^*)
    +\trans{(\balpha-\balpha_j^*)}(\bS_j^*+\gamma_j^*\bI_b)(\balpha-\balpha_j^*)\\
    &\ge R_j^*(\balpha_j^*)+\epsilon_j\|\balpha-\balpha_j^*\|_2^2,
  \end{align*}
  where we used the optimality condition $\nabla R_j^*(\balpha_j^*)=\boldsymbol{0}$.
\end{proof}

\paragraph{Step 2.}
Second, we study the stability (with respect to perturbation) of $R_j^*(\balpha)$ at $\balpha_j^*$. Let
\begin{equation*}
\boldsymbol{u}=\{\boldsymbol{u}_S\in\mathcal{S}_+^b,
\boldsymbol{u}_t\in\mathbb{R}^b,
u_\gamma\in\mathbb{R}\}
\end{equation*}
be a set of perturbation parameters, where $\mathcal{S}_+^b\subset\mathbb{R}^{b\times b}$ is the cone of $b$-by-$b$ symmetric positive semi-definite matrices. Define our perturbed objective function by
\begin{align*}
R_j(\balpha,\boldsymbol{u}) &=
\balpha^{\top}(\bS_j^*+\boldsymbol{u}_S)\balpha
+2\balpha^{\top}(\bt_j^*+\boldsymbol{u}_t)
+(\gamma_j^*+u_\gamma)\balpha^{\top}\balpha.
\end{align*}
It is clear that $R_j^*(\balpha)=R_j(\balpha,\boldsymbol{0})$, and then the stability of $R_j^*(\balpha)$ at $\balpha_j^*$ can be characterized as follows.

\begin{lemma}
  \label{lem:lips}%
  The difference function $R_j(\balpha,\boldsymbol{u})-R_j^*(\balpha)$ is Lipschitz continuous in $\balpha$ modulus
  \begin{equation*}
  \omega(\boldsymbol{u}) = \mathcal{O}
  (\|\boldsymbol{u}_S\|_\mathrm{Fro} +\|\boldsymbol{u}_t\|_2 +|u_\gamma|)
  \end{equation*}
  on a sufficiently small neighborhood of $\balpha_j^*$.
\end{lemma}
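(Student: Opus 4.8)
The plan is to exploit the fact that the perturbation enters the objective only through additive quadratic terms, so that the difference function is itself a simple quadratic whose coefficients are exactly the perturbation parameters. First I would subtract the two definitions and observe that all the contributions involving $\bS_j^*$, $\bt_j^*$, and $\gamma_j^*$ cancel, leaving
\[
  R_j(\balpha,\boldsymbol{u})-R_j^*(\balpha)
  = \balpha^{\top}\boldsymbol{u}_S\balpha
  + 2\balpha^{\top}\boldsymbol{u}_t
  + u_\gamma\,\balpha^{\top}\balpha .
\]
This is a quadratic form in $\balpha$ whose quadratic and linear coefficients are controlled entirely by $\boldsymbol{u}_S$, $u_\gamma$, and $\boldsymbol{u}_t$.

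Next I would bound the gradient of this difference in $\balpha$. Differentiating gives $\nabla_{\balpha}[R_j(\balpha,\boldsymbol{u})-R_j^*(\balpha)] = 2(\boldsymbol{u}_S+u_\gamma\bI_b)\balpha + 2\boldsymbol{u}_t$. Restricting to a closed ball $\{\balpha:\|\balpha-\balpha_j^*\|_2\le\delta\}$ for a fixed small $\delta$, I would use the triangle inequality together with $\|\boldsymbol{u}_S\|_\mathrm{op}\le\|\boldsymbol{u}_S\|_\mathrm{Fro}$ and $\|u_\gamma\bI_b\|_\mathrm{op}=|u_\gamma|$, and the bound $\|\balpha\|_2\le\|\balpha_j^*\|_2+\delta$ valid on the ball, to obtain
\[
  \bigl\|\nabla_{\balpha}[R_j(\balpha,\boldsymbol{u})-R_j^*(\balpha)]\bigr\|_2
  \le 2\bigl(\|\boldsymbol{u}_S\|_\mathrm{Fro}+|u_\gamma|\bigr)
  \bigl(\|\balpha_j^*\|_2+\delta\bigr)
  + 2\|\boldsymbol{u}_t\|_2 .
\]
Since $\|\balpha_j^*\|_2+\delta$ is a fixed constant independent of $\boldsymbol{u}$, the right-hand side is $\mathcal{O}(\|\boldsymbol{u}_S\|_\mathrm{Fro}+\|\boldsymbol{u}_t\|_2+|u_\gamma|)$.

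Finally, because the difference function is scalar-valued and the ball is convex, I would apply the mean value theorem and Cauchy--Schwarz along the segment joining any two points $\balpha_1,\balpha_2$ in the neighborhood: the Lipschitz modulus equals the supremum of the gradient norm over the neighborhood, which is precisely the bound just derived. Identifying this supremum with $\omega(\boldsymbol{u})$ then gives the claimed form.

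The main obstacle is conceptual rather than computational: a quadratic form is \emph{not} globally Lipschitz, since its gradient grows linearly in $\|\balpha\|_2$. The restriction to a sufficiently small (hence bounded) neighborhood of $\balpha_j^*$ is therefore essential---it is what keeps the factor $\|\balpha\|_2$ uniformly bounded and produces a finite modulus with linear dependence on the perturbation magnitude. I expect the only care needed is to state explicitly that the constant hidden in the $\mathcal{O}$ depends on $\|\balpha_j^*\|_2$ and the neighborhood radius $\delta$, but not on $\boldsymbol{u}$, which is exactly what the ``sufficiently small neighborhood'' clause in the statement licenses.
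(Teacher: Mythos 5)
Your proposal is correct and follows essentially the same route as the paper's proof: compute the difference function explicitly, differentiate it in $\balpha$, bound the gradient norm on a $\delta$-ball around $\balpha_j^*$ using $\|\balpha\|_2\le\|\balpha_j^*\|_2+\delta$, and read off the Lipschitz modulus of order $\mathcal{O}(\|\boldsymbol{u}_S\|_\mathrm{Fro}+\|\boldsymbol{u}_t\|_2+|u_\gamma|)$. Your explicit remark that the boundedness of the neighborhood is what makes the quadratic form Lipschitz is exactly the role the paper assigns to the $\ell_2$-regularization bound $\|\balpha_j^*\|_2\le M$.
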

\begin{proof}
  The difference function is
  \[ R_j(\balpha,\boldsymbol{u})-R_j^*(\balpha)
  = \balpha^{\top}\boldsymbol{u}_S\balpha
  +2\balpha^{\top}\boldsymbol{u}_t
  +u_\gamma\balpha^{\top}\balpha, \]
  with a partial gradient
  \[ \frac{\partial}{\partial\balpha}(R_j(\balpha,\boldsymbol{u})-R_j^*(\balpha))
  = 2\boldsymbol{u}_S\balpha +2\boldsymbol{u}_t +2u_\gamma\balpha. \]
  Notice that due to the $\ell_2$-regularization in $R_j^*(\balpha)$, $\exists M>0$ such that $\|\balpha_j^*\|_2\le M$. Now given a $\delta$-ball of $\balpha_j^*$, i.e., $B_\delta(\balpha_j^*) = \{\balpha\mid\|\balpha-\balpha_j^*\|_2\le\delta\}$, it is easy to see that $\forall\balpha\in B_\delta(\balpha_j^*)$,
  \[ \|\balpha\|_2
  \le \|\balpha-\balpha_j^*\|_2 +\|\balpha_j^*\|_2
  \le \delta+M, \]
  and consequently
\begin{align*}
  &\left\|\frac{\partial}{\partial\balpha}
  (R_j(\balpha,\boldsymbol{u})-R_j^*(\balpha))\right\|_2
  \le 2(\delta+M)(\|\boldsymbol{u}_S\|_\mathrm{Fro}+|u_\gamma|)
  +2\|\boldsymbol{u}_t\|_2. 
\end{align*}
  This says that the gradient $\frac{\partial}{\partial\balpha}(R_j(\balpha,\boldsymbol{u})-R_j^*(\balpha))$ has a bounded norm of order $\mathcal{O}(\|\boldsymbol{u}_S\|_\mathrm{Fro}+\|\boldsymbol{u}_t\|_2+|u_\gamma|)$, and proves that the difference function $R_j(\balpha,\boldsymbol{u})-R_j^*(\balpha)$ is Lipschitz continuous on the ball $B_\delta(\balpha_j^*)$, with a Lipschitz constant of the same order.
\end{proof}

\paragraph{Step 3.}
Lemma~\ref{lem:grow} ensures the unperturbed objective $R_j^*(\balpha)$ grows quickly when $\balpha$ leaves $\balpha_j^*$; Lemma~\ref{lem:lips} ensures the perturbed objective $R_j(\balpha,\boldsymbol{u})$ changes slowly for $\balpha$ around $\balpha_j^*$, where the slowness is compared with the perturbation $\boldsymbol{u}$ it suffers. Based on Lemma~\ref{lem:grow}, Lemma~\ref{lem:lips}, and \emph{Proposition 6.1} in \cite{bonnans98},
\begin{equation*}
\|\widehat{\balpha}_j-\balpha_j^*\|_2
\le \frac{\omega(\boldsymbol{u})}{\epsilon_j}
= \mathcal{O}(\|\boldsymbol{u}_S\|_\mathrm{Fro}+\|\boldsymbol{u}_t\|_2+|u_\gamma|),
\end{equation*}
since $\widehat{\balpha}_j$ is the exact solution to $\widehat{R}_j(\balpha)=R_j(\balpha,\boldsymbol{u})$ given $\boldsymbol{u}_S=\widehat{\bS}_j-\bS_j^*$, $\boldsymbol{u}_t=\widehat{\bt}_j-\bt_j^*$, and $u_\gamma=\gamma_j-\gamma_j^*$.

According to the \emph{central limit theorem} (CLT), $\|\boldsymbol{u}_S\|_\mathrm{Fro}=\mathcal{O}_p(n^{-1/2})$. Consider $\widehat{\bt}_j-\bt_j^*$:
\begin{align*}
\widehat{\bt}_j-\bt_j^*
  &=\frac{1}{n}\sum_{i=1}^{n}\partial_j \bvarphi_j(\bx_i)
  -\mathbb{E}\left[ \partial_j \bvarphi_j(\bx) \right]
   +\frac{1}{n}\sum_{i=1}^{n}  \bvarphi_j(\bx_i)
  \trans{\left(\nabla_{\bx}\widehat{g}^{(j)}(\bx_i)\right)}\bx_i\\
  &\quad -\mathbb{E}\left[  \bvarphi_j(\bx)
  \trans{\left(\nabla_{\bx}g^{*(j)}(\bx)\right)}\bx \right].
\end{align*}
The first half is clearly $\mathcal{O}_p(n^{-1/2})$ due to CLT. For the second half, the estimate $\widehat{g}^{(j)}(\bx)$ given by LSLDG converges to $g^{*(j)}(\bx)$ for any $\bx$ in $\mathcal{O}_p(n^{-1/2})$ according to \emph{Part One in the proof of Theorem 1} in \cite{AISTATS:Hiroaki+:2016:LSNGCA}, and $\nabla_{\bx}\widehat{g}^{(j)}(\bx)$ converges to $\nabla_{\bx}g^{*(j)}(\bx)$ in the same order because the basis functions in $\bpsi_j(\bx)$ are all derivatives of Gaussian functions. Consequently,
\begin{align*}
&\frac{1}{n}\sum_{i=1}^{n}  \bvarphi_j(\bx_i)
\trans{\left(\nabla_{\bx}\widehat{g}^{(j)}(\bx_i)\right)}\bx_i
- \frac{1}{n}\sum_{i=1}^{n} \bvarphi_j(\bx_i)
\trans{\left(\nabla_{\bx}g^{*(j)}(\bx_i)\right)}\bx_i
=\mathcal{O}_p(n^{-1/2}), 
\end{align*}
since $\nabla_{\bx}\widehat{g}^{(j)}(\bx)$ converges to $\nabla_{\bx}g^{*(j)}(\bx)$ for any $\bx$ in $\mathcal{O}_p(n^{-1/2})$, and
\begin{align*}
&\frac{1}{n}\sum_{i=1}^{n} \bvarphi_j(\bx_i)
\trans{\left(\nabla_{\bx}g^{*(j)}(\bx_i)\right)}\bx_i
-\mathbb{E}\left[ \bvarphi_j(\bx)
\trans{\left(\nabla_{\bx}g^{*(j)}(\bx)\right)}\bx \right]
=\mathcal{O}_p(n^{-1/2})
\end{align*}
due to CLT, which proves $\|\boldsymbol{u}_t\|_2=\mathcal{O}_p(n^{-1/2})$. Furthermore, we have already assumed that $|u_\gamma|=\mathcal{O}(n^{-1/2})$. Hence, as $n\to\infty$,
\begin{equation*}
\|\widehat{\balpha}_j-\balpha_j^*\|_2
= \mathcal{O}_p\left(n^{-1/2}\right).
\end{equation*}

\paragraph{Step 4.}
Finally, for any $\bx$, the gap of $\widehat{v}^{(j)}(\bx)$ and $w^{*(j)}(\bx)$ is bounded by
\begin{align*}
|\widehat{v}^{(j)}(\bx)-w^{*(j)}(\bx)|
\le \|\widehat{\balpha}_j-\balpha_j^*\|_2
\cdot \|\bvarphi_j(\bx)\|_2,
\end{align*}
where the \emph{Cauchy-Schwarz inequality} is used. Since the basis functions in $\bvarphi_j(\bx)$ are again all derivatives of Gaussian functions, $\|\bvarphi_j(\bx)\|_2$ must be bounded uniformly, and then
\begin{equation*}
|\widehat{v}^{(j)}(\bx)-w^{*(j)}(\bx)|
\le \mathcal{O}(\|\widehat{\balpha}_j-\balpha_j^*\|_2)
= \mathcal{O}_p\left(n^{-1/2}\right).
\end{equation*}
Applying the same argument for all $j=1,\ldots,d$ completes the proof. \qed

\section{Experiments}

In this section, we experimentally investigate the performance of MIPP, LSNGCA, and WF-LSNGCA.\footnote{
The source code of the experiments is at~\url{https://github.com/hgeno/WFLSNGCA}.
}

\subsection{Configurations of NGCA Algorithms}
\label{appendix:configurations}

\subsubsection{MIPP}
We use the MATLAB script which was used in the original MIPP paper \cite{JMLR:Blanchard+:2006:MIPP}\footnote{\url{http://www.ms.k.u-tokyo.ac.jp/software.html}}.
In this script, NGIFs of the form $s^k_m(z)$ ($m=1,\ldots,1000, k=1,\ldots,4$) are used:
\begin{align*}
s^1_{m}(z) &= z^3\exp{-\frac{z^2}{2\sigma_m^2}},~~~
s^2_{m}(z) = \mathrm{tanh}\left(a_m z\right), ~~~\\
s^3_{m}(z) &= \sin(b_m z),~~~
s^4_{m}(z) = \cos(b_m z),
\end{align*}
where $\sigma_m$, $a_m$, and $b_m$ are scalars chosen at the regular intervals from $\sigma_m\in[0.5,5]$, $a_m\in[0.05,5]$, and $b_m\in[0.05,4]$.
The cut-off threshold $\tau$ is set at $1.6$ and the number of FastICA iterations is set at $10$
(see Section~\ref{mipp}).

\subsubsection{LSNGCA}
Following \cite{Springer:Sasaki+:2014:Clustering},
the derivative of the Gaussian kernel is used as the basis function $\psi_{k,j}(\by)$ in the linear-in-parameter model \eqref{eq:model}:
\begin{align*}
\psi_{k,j}(\by) = \partial_j\exp{-\frac{\norm{\by-\bc_k}^2}{2\sigma_j^2}},
\end{align*}
where $\sigma_j>0$ is the Gaussian bandwidth and $\bc_k$ is the Gaussian center randomly selected from the whitened data samples $\{\by_i\}_{i=1}^{n}$. 
The number of basis functions is set at $b=100$.
For model selection, $5$-fold cross-validation is performed with respect to the hold-out 
error of Eq.\eqref{eq:lsldg-approx-loss2}
using $10$ candidate values at the regular intervals in logarithmic scale
for Gaussian bandwidth $\sigma_j\in[10^{-1},10^{1}]$ and regularization parameter $\lambda_j\in[10^{-5},10^1]$.

\subsubsection{WF-LSNGCA}
Similarly to LSNGCA, the derivative of the Gaussian kernel is used as the basis function 
$\varphi_{k,j}(\bx)$ in the linear-in-parameter model \eqref{eq:proposed-model} and the number of basis functions is set as $t=b=100$.
For model selection, $5$-fold cross-validation is performed with respect to the hold-out error of Eq.\eqref{R-LSLDG}
in the same way as LSNGCA.

\subsection{Artificial Datasets}
\label{sec:artificial}


\begin{figure}[t]
\centering
\subfigure[Independent Gaussian Mixture]{
\includegraphics[width = 0.225\textwidth]{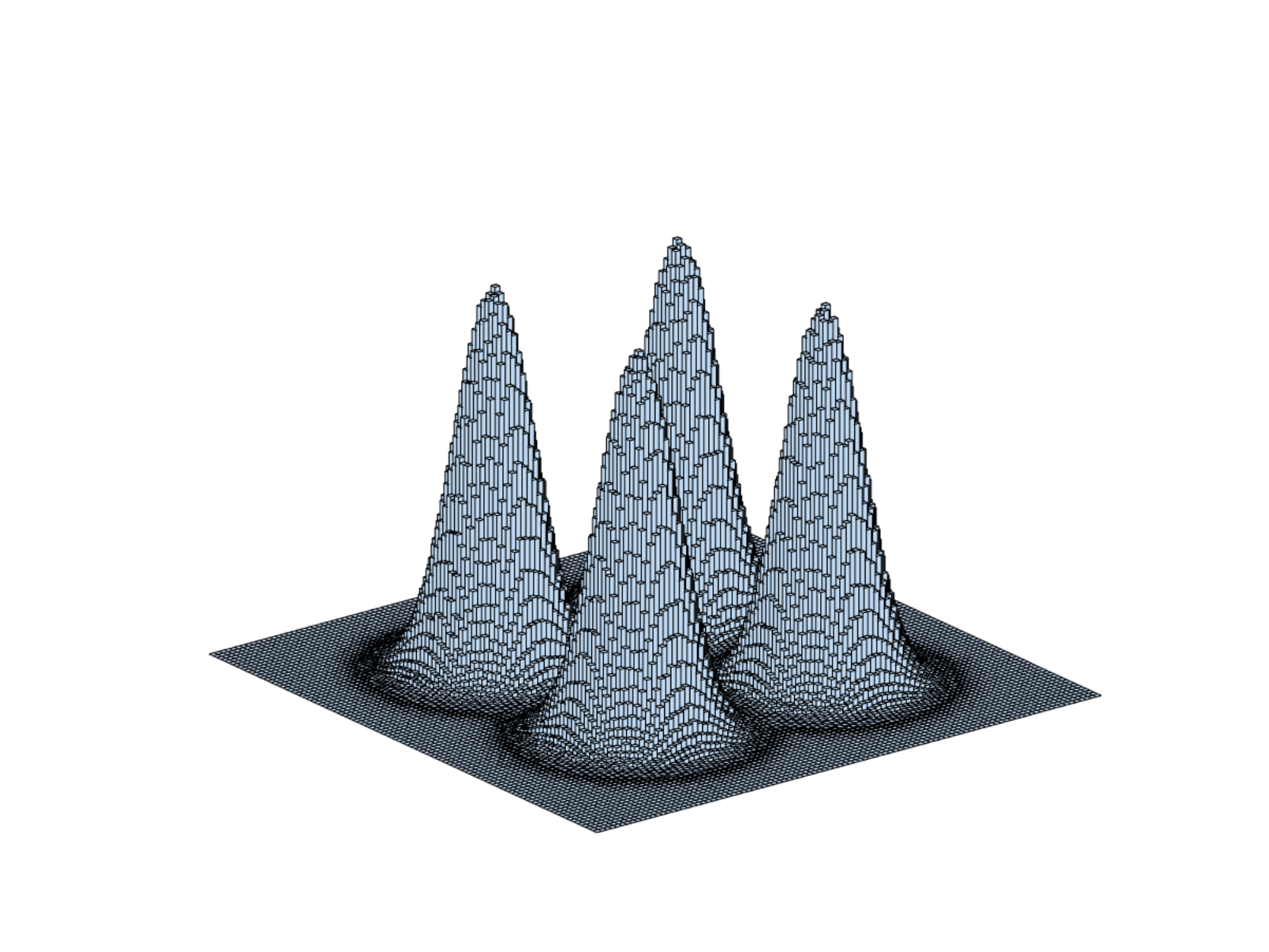}
}
\subfigure[Dependent Super-Gaussian]{
\includegraphics[width = 0.225\textwidth]{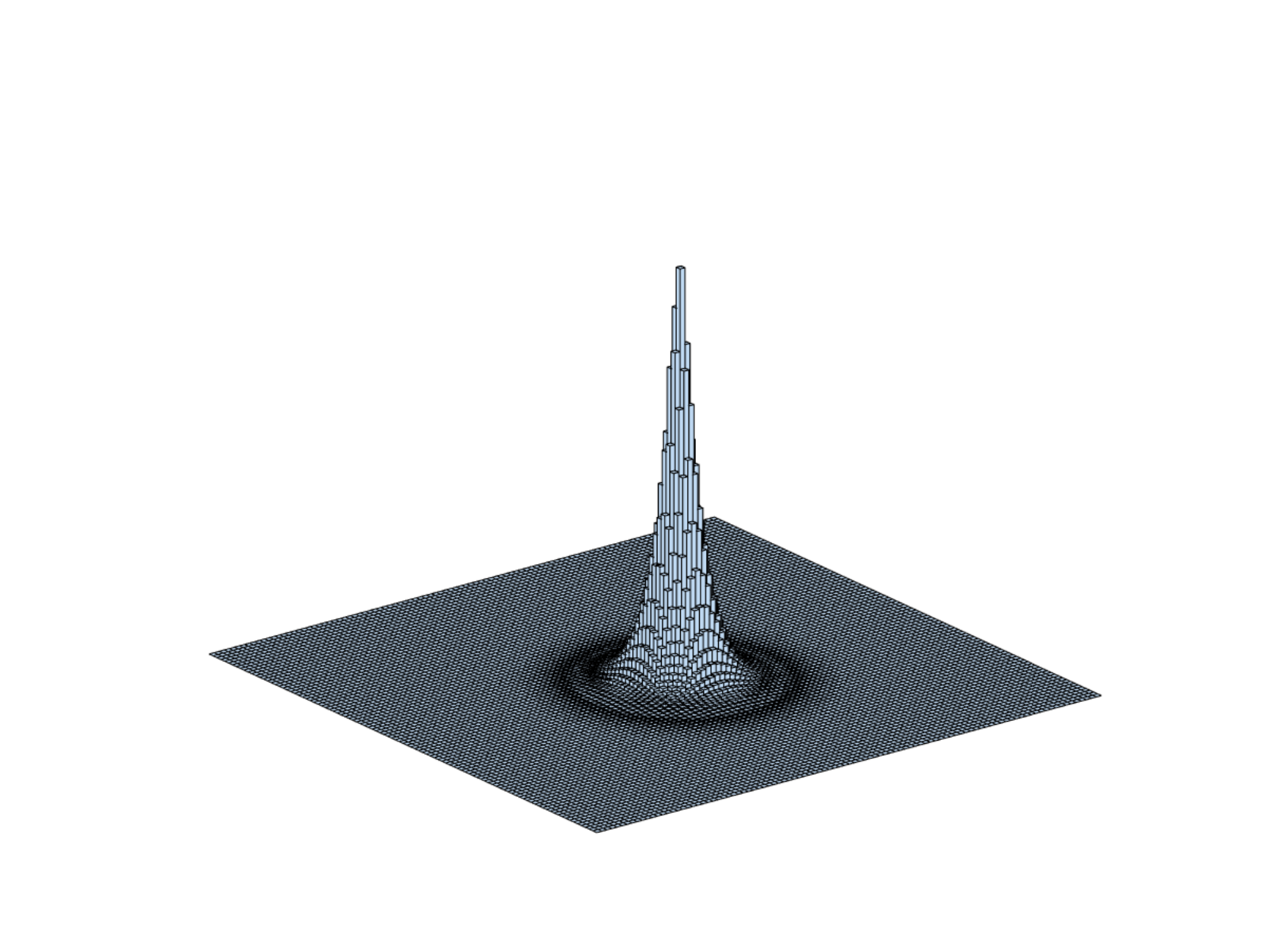}
}
\subfigure[Dependent Sub-Gaussian]{
\includegraphics[width = 0.225\textwidth]{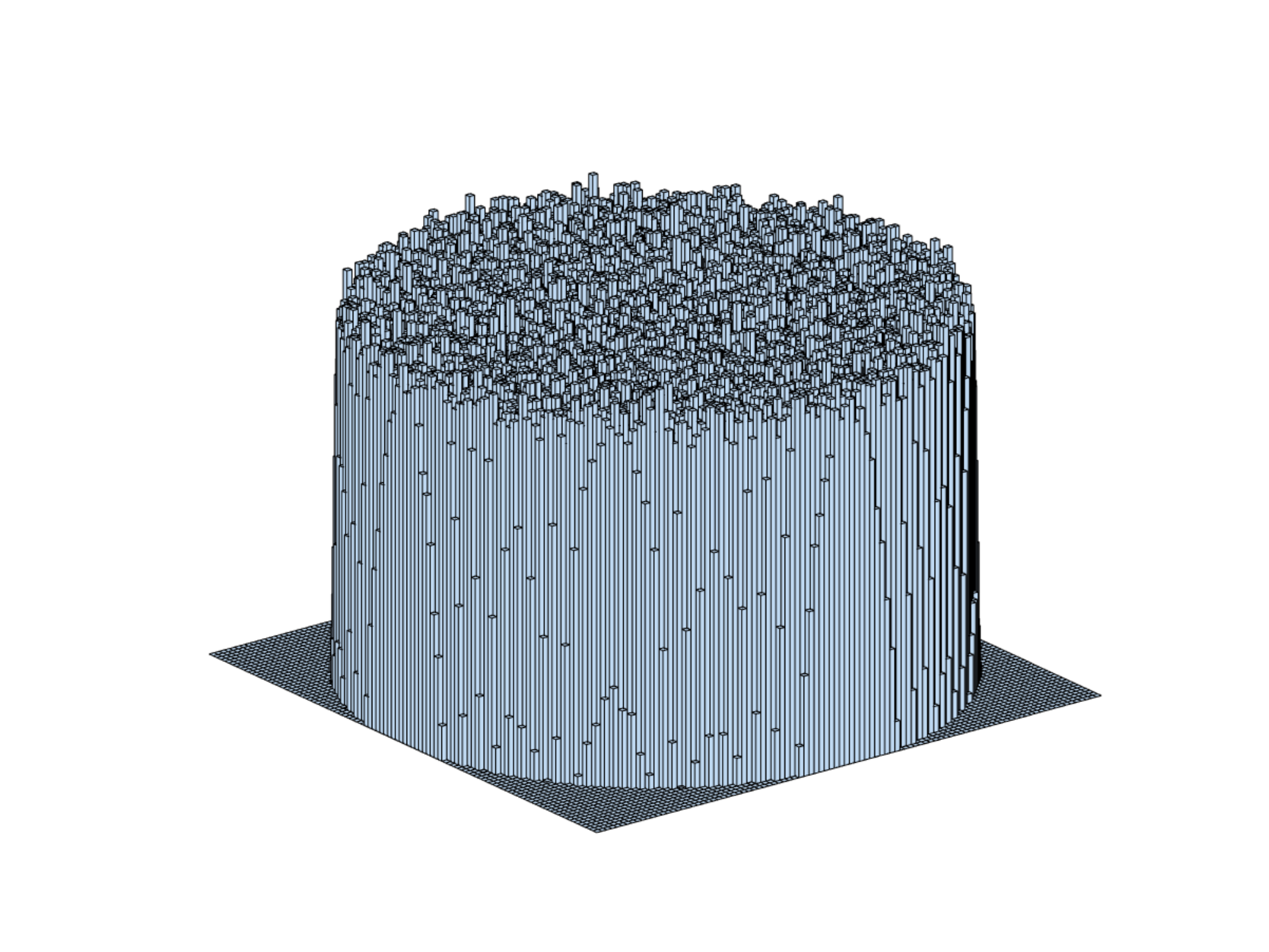}
}
\subfigure[Dependent Super- and Sub-Gaussian]{
\includegraphics[width = 0.225\textwidth]{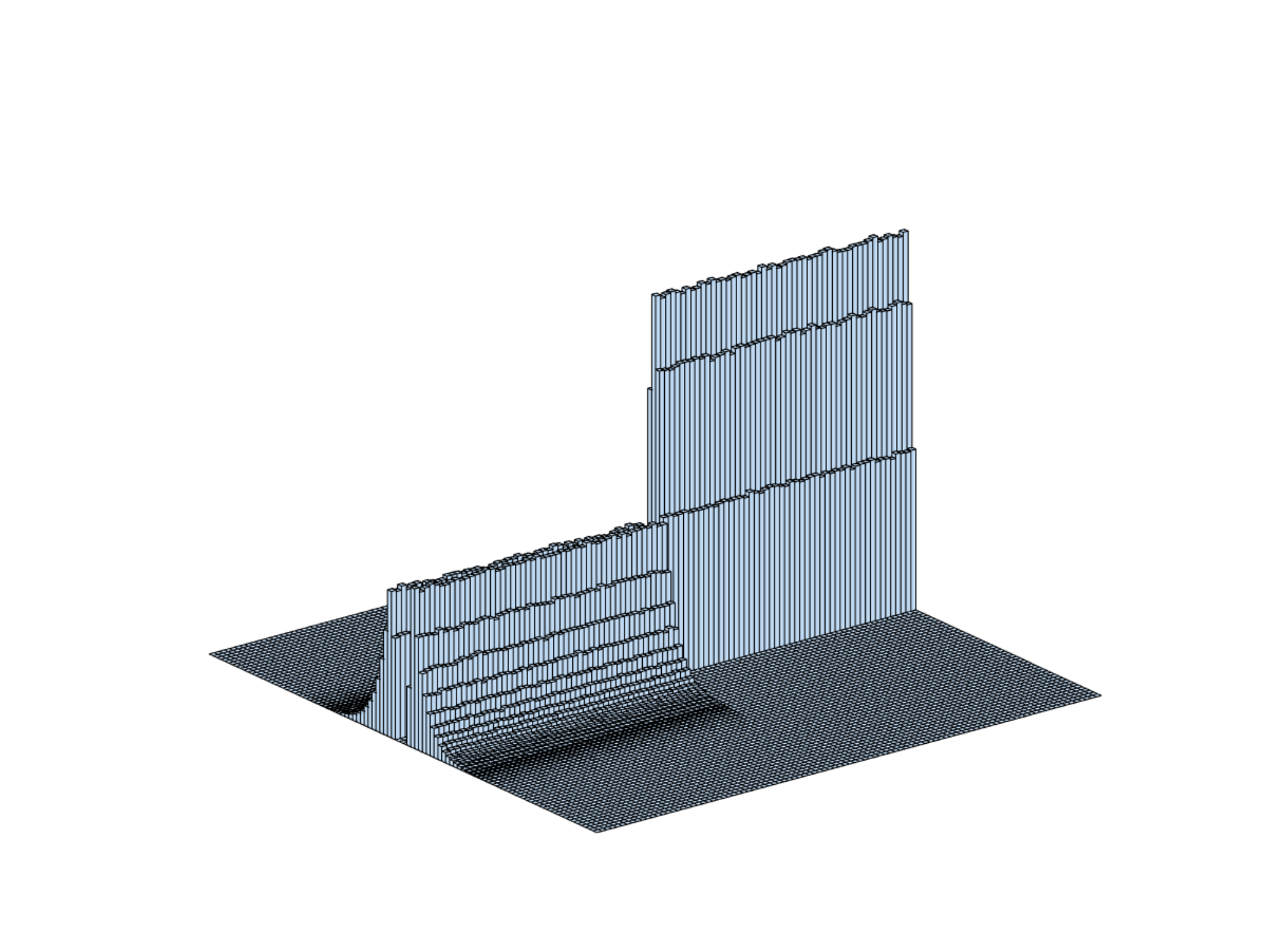}
}
\caption{Distributions of non-Gaussian components.}
\label{fig:Dist}
\vspace{-2mm}
\end{figure}
\begin{figure}[t]
\centering
\subfigure[Independent Gaussian Mixture]{
\includegraphics[width = 0.225\textwidth]{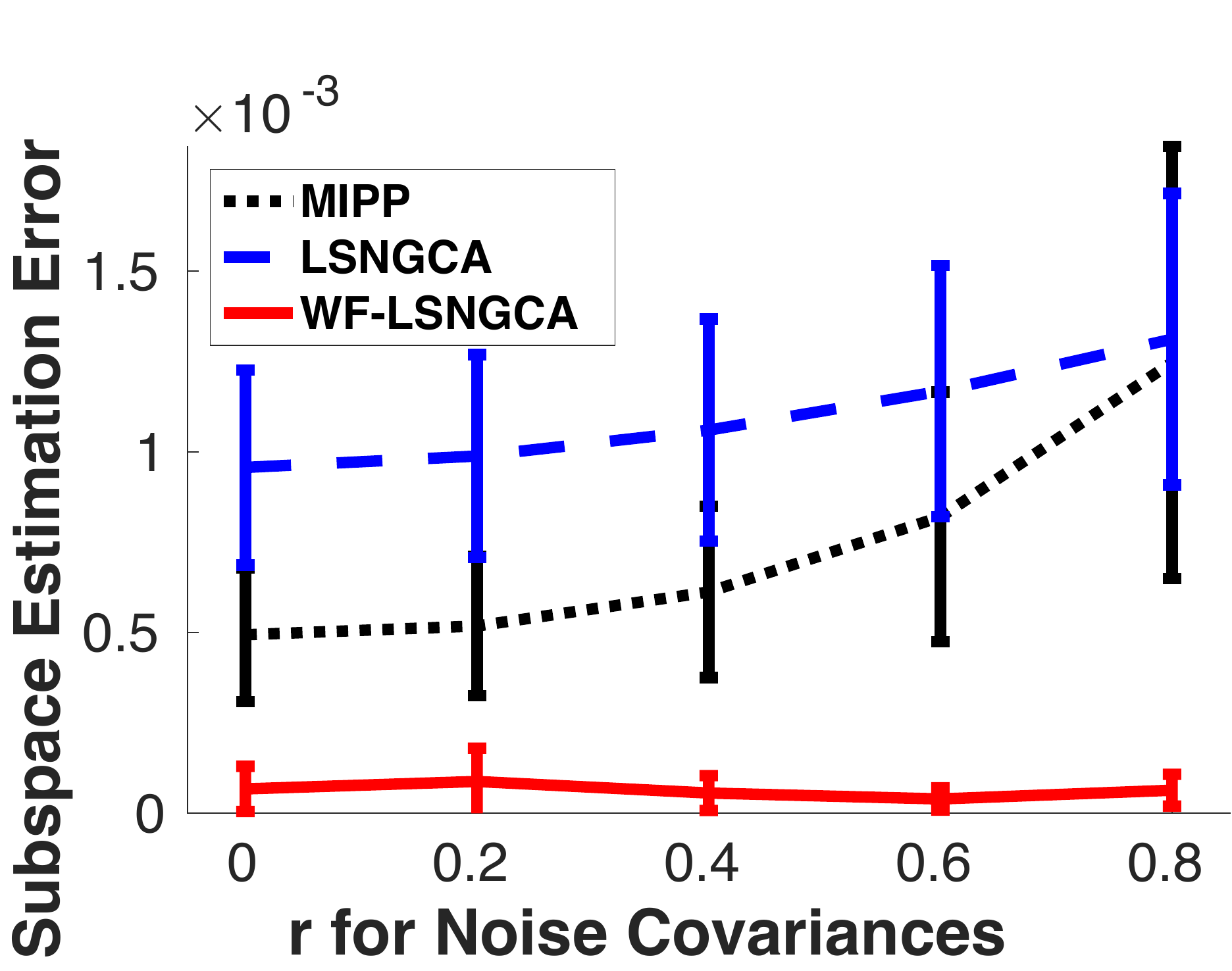}
}
\subfigure[Dependent Super-Gaussian]{
\includegraphics[width = 0.225\textwidth]{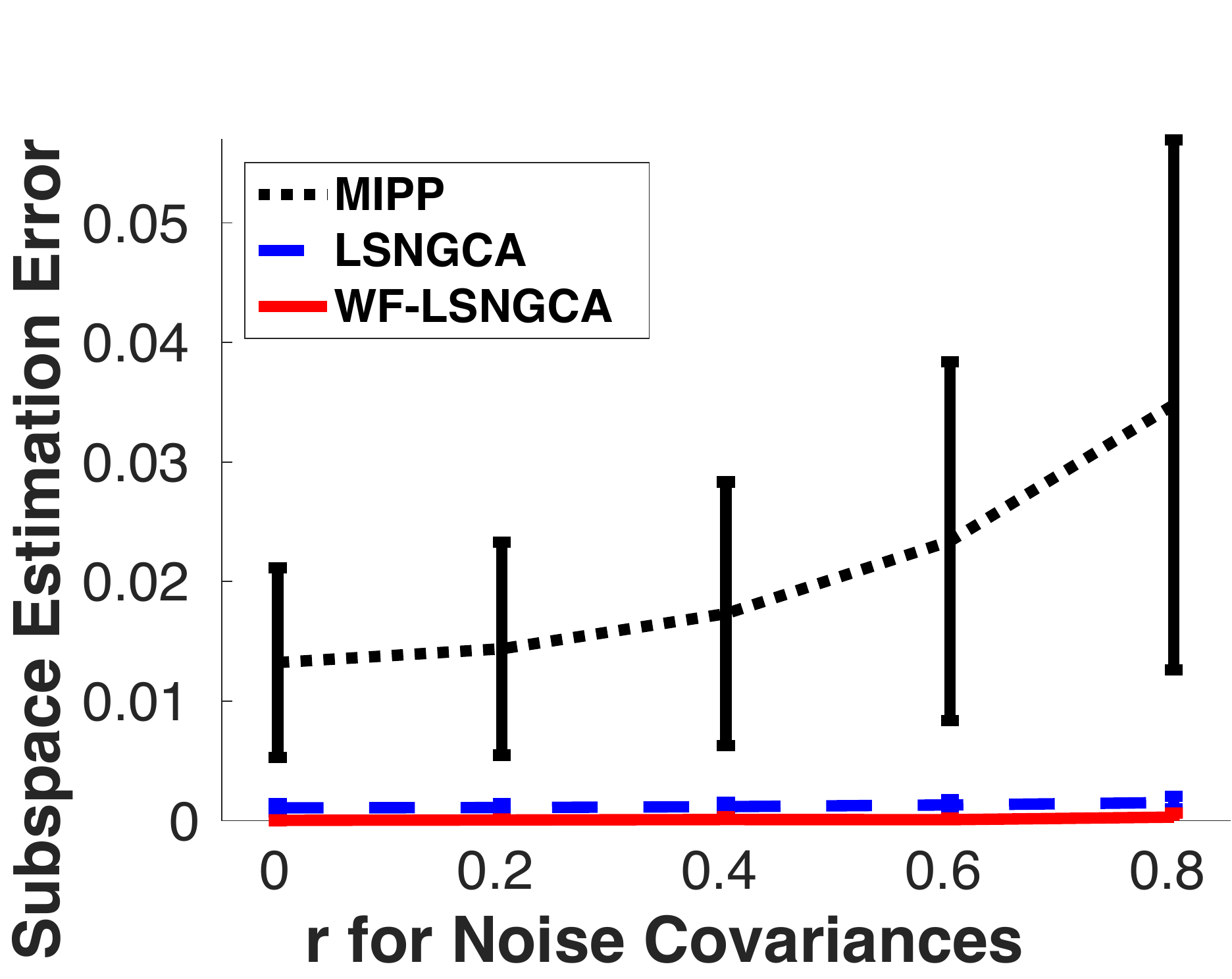}
}
\subfigure[Dependent Sub-Gaussian]{
\includegraphics[width = 0.225\textwidth]{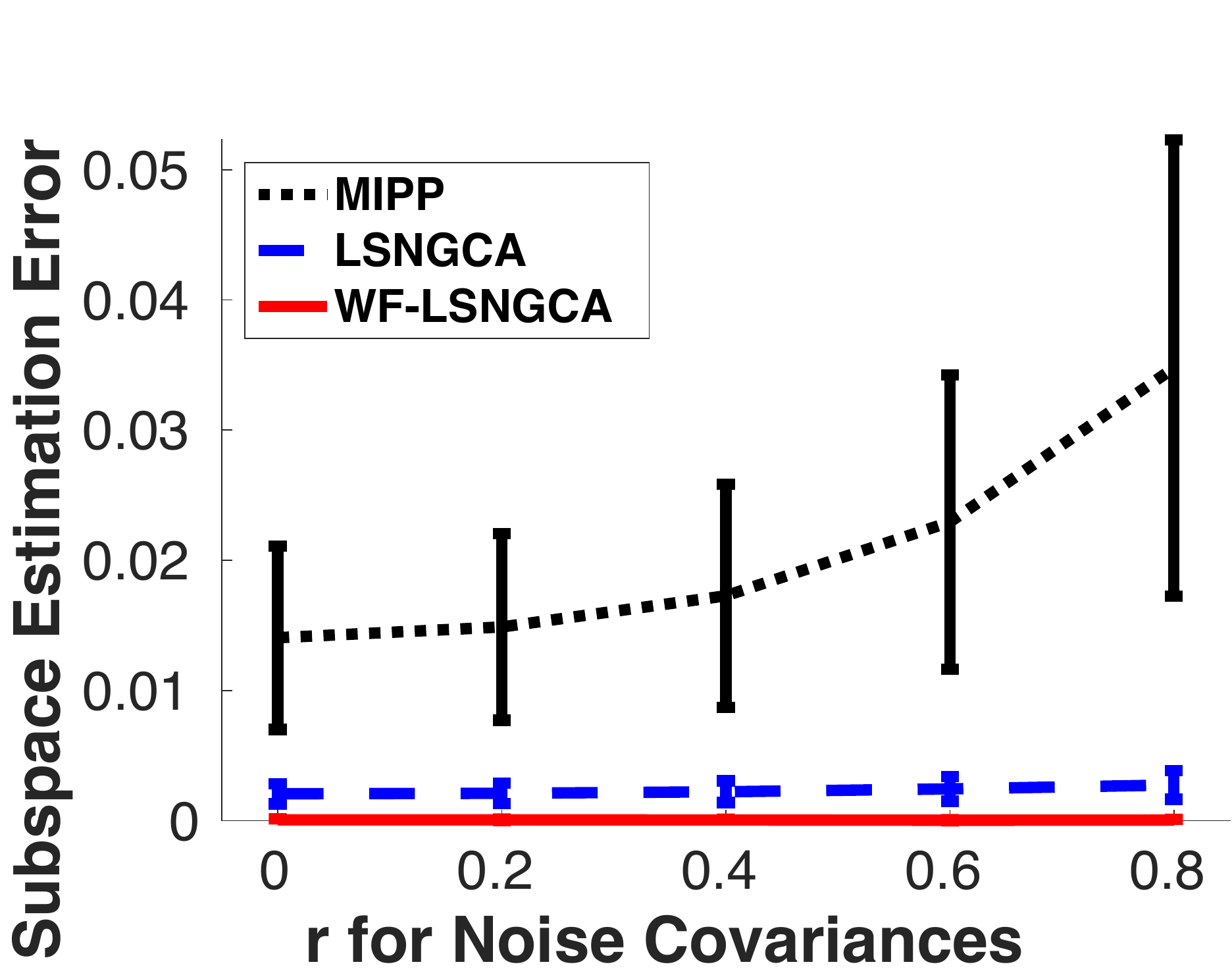}
}
\subfigure[Dependent Super- and Sub-Gaussian]{
\includegraphics[width = 0.225\textwidth]{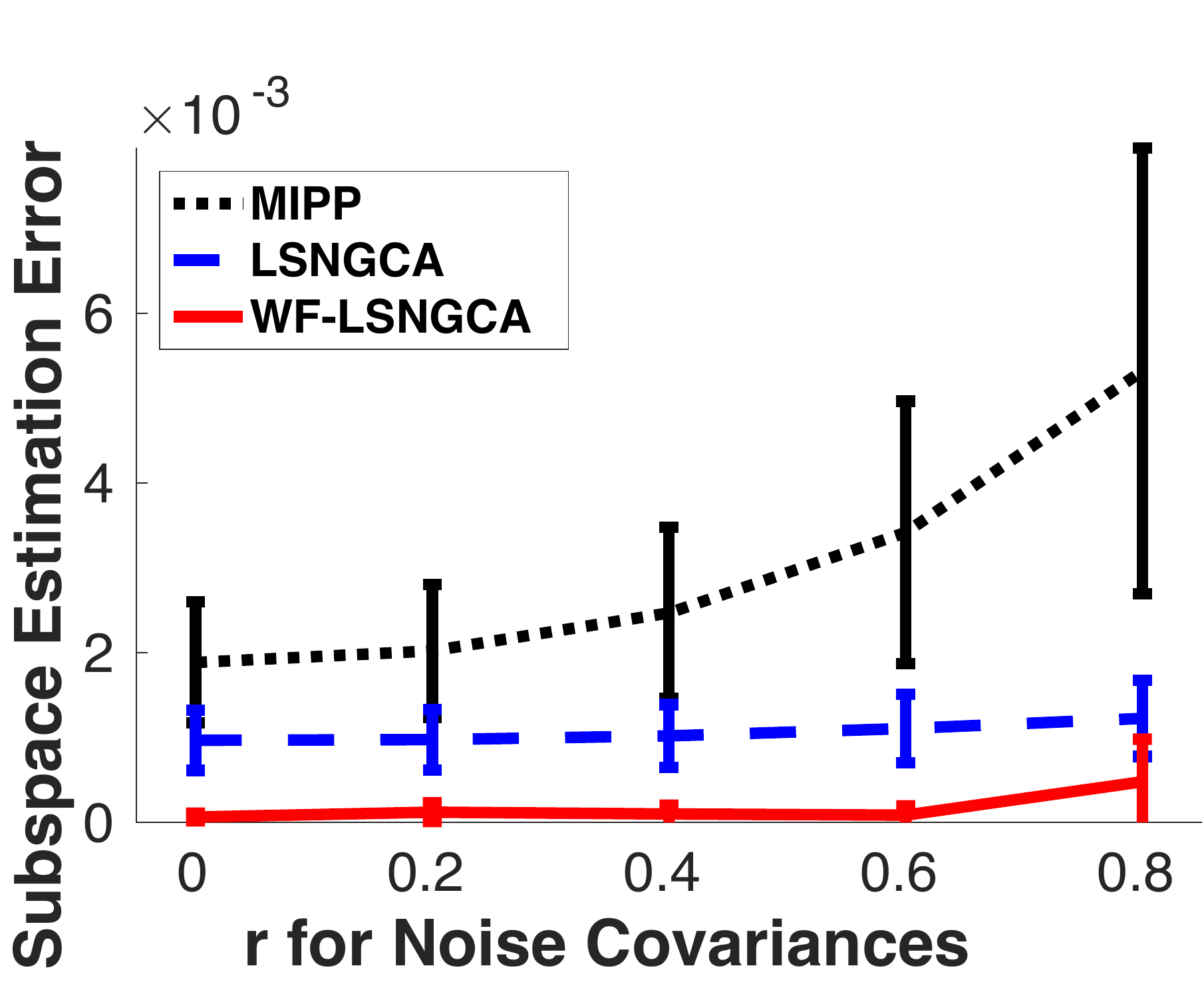}
}
\caption{Averages and standard deviations of the subspace estimation error
 as the function of the condition-number controller $r$
  over 50 simulations on artificial datasets.} 
\label{fig:existings}
\vspace{-2mm}
\end{figure}

Let $\bx=\trans{(s_1,s_2,n_3,\ldots,n_{10})}$,
where $\bs := \trans{(s_1,s_2)}$ are the $2$-dimensional non-Gaussian signal components 
and
$\bn := \trans{(n_3,\ldots,n_{10})}$ are the $8$-dimensional Gaussian noise components.
For the non-Gaussian signal components,
we consider the following four distributions 
plotted in Figure~\ref{fig:Dist}:
\begin{description}
\item 
\textbf{(a) Independent Gaussian Mixture}:\\
$\displaystyle p(s_1,s_2) \propto \prod_{i=1}^{2}\left( \exp{-\frac{(s_i-3)^2}{2}}+\exp{-\frac{(s_i+3)^2}{2}}\right)$.
\item 
\textbf{(b) Dependent super-Gaussian}:\\
$p(\bs) \propto \exp{-\norm{\bs}}$.
\item 
\textbf{(c) Dependent sub-Gaussian}:\\
$p(\bs)$ is the uniform distribution on $\left\{\bs\in\mathbb{R}^2|\norm{\bs}\leq1\right\}$.
\item 
\textbf{(d) Dependent super- and sub-Gaussian}:\\
 $p(s_1) \propto \exp{-\abs{s_1}}$ and $p(s_2)$ is the uniform distribution 
on $[c,c+1]$, where $c=0$ if $\abs{s_1}\leq\log2$ and $c=-1$ otherwise.
\end{description}
%
%
For the Gaussian noise components, we include a certain parameter $r\ge0$,
which controls the condition number;
the larger $r$ is, the more ill-posed the data covariance matrix is.
The detail is described in Appendix~\ref{appendix:artificial}. 

We generate $n=2000$ samples for each case,
and standardize each element of the data before applying NGCA algorithms.
The performance of NGCA algorithms is measured by the following \emph{subspace estimation error}:
\begin{align}
\varepsilon(E,\widehat{E}) := \frac{1}{2}\sum^{2}_{i=1}\norm{\widehat{\be}_i-\Pi_{E}\widehat{\be}_i}^2,
\end{align}
where $E$ is the true non-Gaussian index space, $\widehat{E}$ is its estimate,
$\Pi_{E}$ is the orthogonal projection on $E$, and $\{\widehat{\be}_i\}_{i=1}^{2}$ is an orthonormal basis in $\widehat{E}$.

The averages and the standard derivations of the subspace estimation error
over $50$ runs for MIPP, LSNGCA, and WF-LSNGCA are depicted in Figure~\ref{fig:existings}.
This shows that, for all 4 cases, the error of MIPP grows rapidly as $r$ increases. 
On the other hand, LSNGCA and WF-LSNGCA perform much stably against the change in $r$.
However, LSNGCA performs poorly for (a).
Overall, WF-LSNGCA is shown to be much more robust against ill-conditioning
than MIPP and LSNGCA.
\begin{figure}[t]
\centering
\subfigure[The function of sample size.]{
\includegraphics[width = 0.4\textwidth]{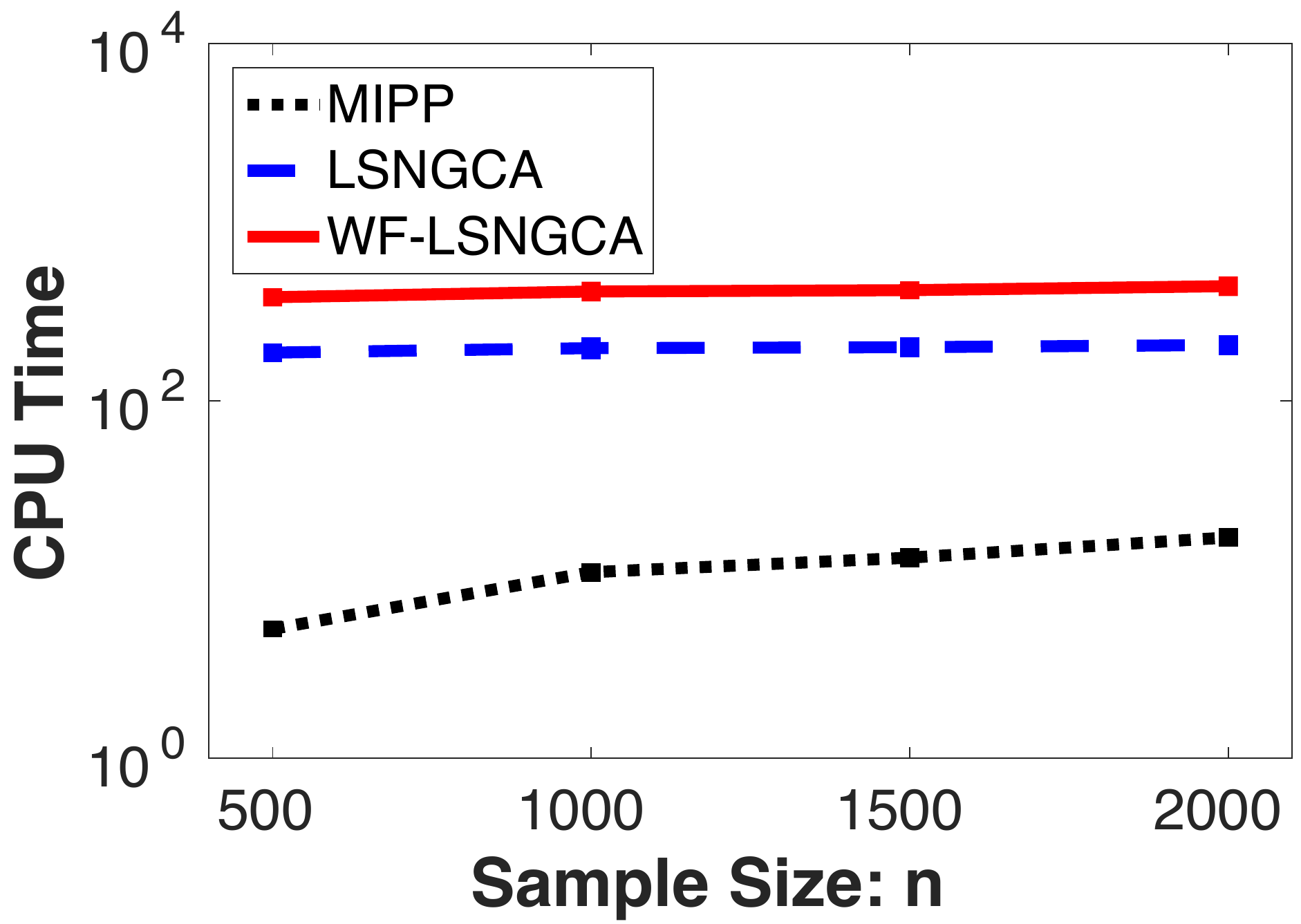}
}
\subfigure[The function of data dimension.]{
\includegraphics[width = 0.4\textwidth]{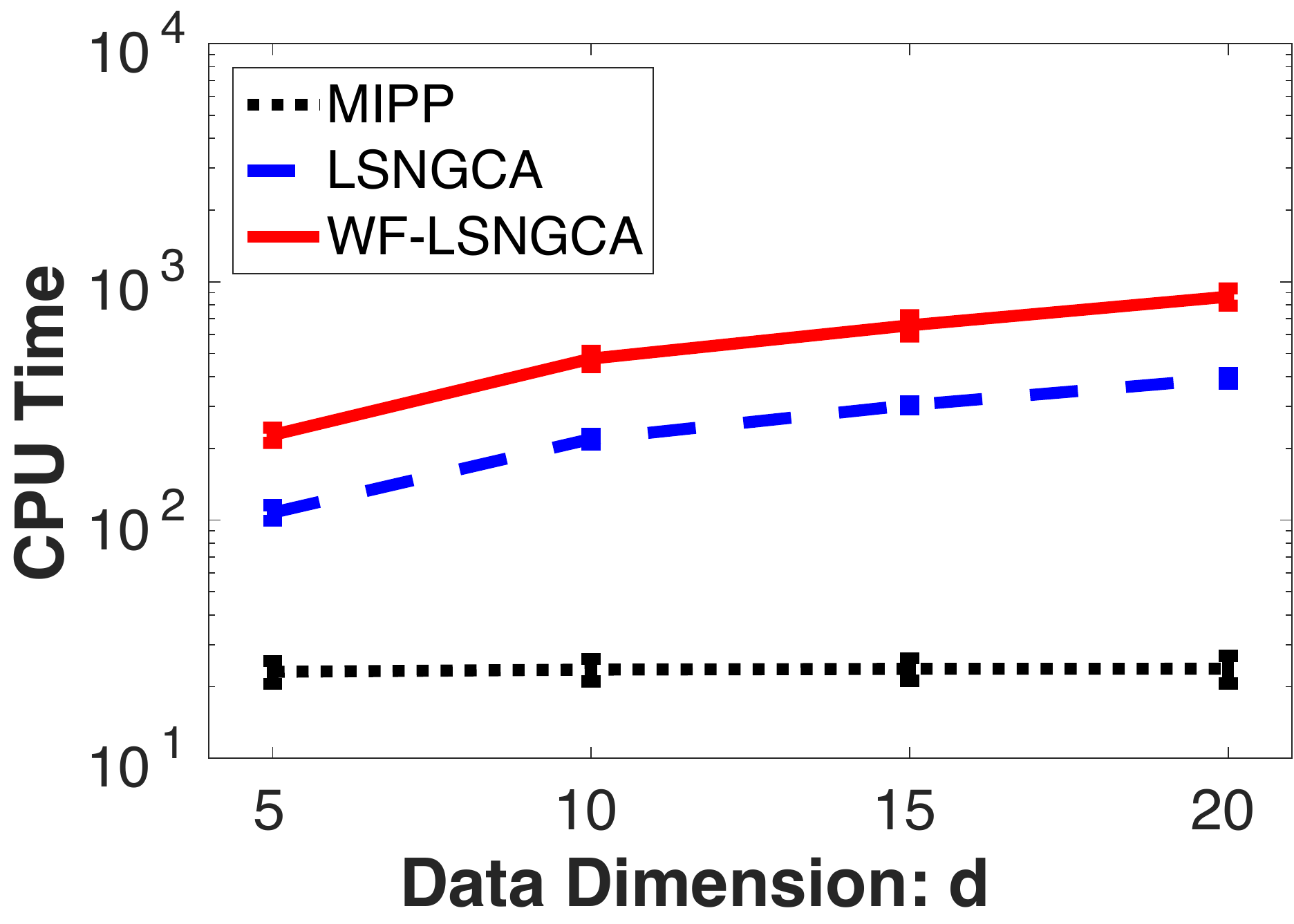}
}
\caption{The average CPU time over 50 runs when the Gaussian mixture is used as non-Gaussian components and the condition-number controller $r=0$. 
The vertical axis is in logarithmic scale.} 
\label{fig:cputime}
\vspace{-2mm}
\end{figure}

In terms of the computation time,
WF-LSNGCA is less efficient than LSNGCA and MIPP,
but its computation time is still just a few times slower than LSNGCA, as seen in Figure~\ref{fig:cputime}.
For this reason, the computational efficiency of WF-LSNGCA would still be acceptable in practice.

\subsection{Benchmark Datasets}
\label{sec:benchmark}

Finally, we evaluate the performance of NGCA methods using
the \emph{LIBSVM binary classification benchmark datasets}\footnote{
We preprocessed the LIBSVM binary classification benchmark datasets as follows:
\begin{itemize}
\item 
\emph{vehicle}: We convert original labels `1' and `2' to the positive label and original labels `3' and `4' to the negative label.
\item 
\emph{SUSY}: We convert original label `0' to the negative label.
\item
\emph{shuttle}: We use only the data labeled as `1' and `4' and regard them as positive and negative labels.
\item 
\emph{svmguide1}: We mix the original training and test datasets.
\end{itemize}
} \cite{chang2011libsvm}.
From each dataset, $n$ points are selected as training (test) samples so that the number of positive and negative samples are equal, and datasets are 
standardized in each dimension.
For an $m$-dimensional dataset, we append $(d-m)$-dimensional noise dimensions following the standard Gaussian distribution
so that all datasets have $d$ dimensions.
Then we use PCA, MIPP, LSNGCA, and WF-LSNGCA 
to obtain $m$-dimensional expressions,
and apply the support vector machine (SVM) \footnote{~We used LIBSVM with MATLAB~\cite{chang2011libsvm}.}
to evaluate the test misclassification rate.
As a baseline, we also evaluate the misclassification rate
by the raw SVM without dimension reduction.

The averages and standard deviations of the misclassification rate over $50$ runs
for $d=50,100$ are summarized in Table~\ref{table:svm}.
As can be seen in the table, 
the appended Gaussian noise dimensions have negative effects on each classification accuracy, and thus the baseline has relatively high misclassification rates.
PCA has overall higher misclassification rates than the baseline
since a lot of valuable information for each classification problem is lost.
Among the NGCA algorithms,  WF-LSNGCA overall compares favorably with the other methods.
This means that it can find valuable low-dimensional expressions for each classification problem without harmful effects of a pre-whitening procedure.
\begin{table}[h]
  \centering
\small
  \caption{
Averages (and standard deviations in the parentheses) of the misclassification rates
for the LIBSVM datasets over 50 runs.
The best and comparable algorithms judged by the two-sample t-test at the significance level 5\% are expressed as boldface.
}
\label{table:svm}
\begin{tabular}{|@{\ }c@{\ }||@{\ }c@{\ }|@{\ }c@{\ }|@{\ }c@{\ }|@{\ }c@{\ }|@{\ }c@{\ }|}
  \hline
&\multicolumn{5}{c@{}|}{$d=50$}\\
  \hline
   \begin{tabular}{@{}c@{}}
Dataset\\
$[m,n]$
\end{tabular}
&
\begin{tabular}{@{}c@{}}
No Dim.\\
Red.
\end{tabular}
& PCA  &  MIPP &
\begin{tabular}{@{}c@{}}
LS\\
NGCA
\end{tabular}
&
\begin{tabular}{@{}c@{}}
WF-LS\\
NGCA
\end{tabular}
\\
    \hline    \hline
vehicle& 0.340& 0.404& 0.328& 0.324& {\bf 0.286}\\
$[18,200]$& (0.038) & (0.034) & (0.044) & (0.044) & {\bf (0.038)}\\
\hline
svmguide3& 0.342  & 0.348  & 0.341 &  0.326 & {\bf 0.308}\\
$[21,200]$& (0.035)  & (0.037)  & (0.041) & (0.039) & {\bf (0.036)}\\
\hline
svmguide1& 0.088  & 0.159  & 0.060 &  0.058 & {\bf 0.053}\\
$[3,2000]$& (0.007)  & (0.012)  & (0.005) & (0.006) & {\bf (0.008)}\\
\hline
shuttle& 0.031  & 0.024  & 0.021 & 0.041 & {\bf 0.007}\\
$[9,2000]$& (0.004)  & (0.007)  & (0.004) & (0.015) & {\bf (0.002)}\\
\hline
SUSY& 0.238 & 0.271 & 0.229 & {\bf 0.223} & 0.228\\
$[18,2000]$& (0.010)  & (0.012)  & (0.010) & {\bf (0.012)} & (0.010)\\
\hline
ijcnn1& 0.102  & 0.273  & 0.084 & 0.061 & {\bf 0.057}\\
$[22,2000]$& (0.007)  & (0.012)  & (0.028) & (0.006) & {\bf (0.007)}\\
    \hline
 \end{tabular}
%
%

\vspace*{5mm}

\begin{tabular}{|@{\ }c@{\ }||@{\ }c@{\ }|@{\ }c@{\ }|@{\ }c@{\ }|@{\ }c@{\ }|@{\ }c@{\ }|}
   \hline
&\multicolumn{5}{@{}c|}{$d=100$}\\
   \hline
   \begin{tabular}{@{}c@{}}
Dataset\\
$[m,n]$
\end{tabular}
&
\begin{tabular}{@{}c@{}}
No Dim.\\
Red.
\end{tabular}
& PCA  &  MIPP &
\begin{tabular}{@{}c@{}}
LS\\
NGCA
\end{tabular}
&
\begin{tabular}{@{}c@{}}
WF-LS\\
NGCA
\end{tabular}
\\
    \hline    \hline
vehicle
& 0.380& 0.432& 0.445& 0.439& {\bf 0.360}\\
$[18,200]$
& (0.033) & (0.033) & (0.036) & (0.045) & {\bf (0.051)}\\
\hline
svmguide3
& 0.363  & 0.367  & 0.443 &  0.427 & {\bf 0.343} \\
$[21,200]$
& (0.034)  & (0.032)  & (0.042) & (0.035) & {\bf (0.044)} \\
\hline
svmguide1
& 0.102  & 0.175  & 0.087 &  0.088 & {\bf 0.067} \\
$[3,2000]$
& (0.008)  & (0.013)  & (0.013) & (0.059) & {\bf (0.020)} \\
\hline
shuttle
& 0.038  & 0.069  & 0.065 & 0.209 & {\bf 0.017}\\
$[9,2000]$
& (0.005)  & (0.013)  & (0.019) & (0.080) & {\bf (0.005)}\\
\hline
SUSY
& 0.250 & 0.280 & 0.234 & {\bf 0.228} & {\bf 0.226}\\
$[18,2000]$
& (0.010)  & (0.011)  & (0.009) & {\bf (0.011)} & {\bf (0.010)}\\
\hline
ijcnn1
& 0.145  & 0.318  & 0.107 & 0.101 & {\bf 0.091} \\
$[22,2000]$
& (0.008)  & (0.013)  & (0.021) & (0.010) & {\bf (0.020)} \\
    \hline
 \end{tabular}
\end{table}

\section{Conclusions}
In this paper, we proposed a novel NGCA algorithm which is computationally efficient,
no manual design of non-Gaussian index functions is required,
and pre-whitening is not involved.
Through experiments, we demonstrated that the effectiveness of the proposed method.


 



\bibliography{references.bib}
\bibliographystyle{unsrt}

\newpage

\appendix
\section*{Supplementary Materials to
Whitening-Free Least-Squares Non-Gaussian Component Analysis}

\section{Details of Artificial Datasets}
\label{appendix:artificial}
Here, we describe the detail of the artificial datasets used in Section~\ref{sec:artificial}.
The noise components $\bn$ are generated as follows:
\begin{enumerate}
\item
The $\bn$ is sampled from the centered Gaussian distribution
with covariance matrix $\mathrm{diag}(10^{-2r},10^{-2r+4r/7},10^{-2r+8r/7},\ldots,10^{2r})$,
where $\mathrm{diag}(\cdot)$ denotes the diagonal matrix. 
\item
The sampled $\bn$ is rotated as $\bn''\in\mathbb{R}^8$by
applying the following rotation matrix $\boldsymbol{R}^{(i,j)}$ for all 
$i,j = 3,\ldots,10$ such that $i<j$:
\begin{align*}
  R^{(i,j)}_{i,i} &= \cos(\pi/4),~~ R^{(i,j)}_{i,j} = -\sin(\pi/4), \\
  R^{(i,j)}_{j,i} &= \sin(\pi/4),~~ R^{(i,j)}_{j,j} = \cos(\pi/4),\\
  R^{(i,j)}_{k,k} &= 1\ (k\neq i,k\neq j),~~ R^{(i,j)}_{k,l} = 0\ (\mathrm{otherwise}).
\end{align*}
\item
The rotated $\bn$ is normalized for each dimension.
\end{enumerate}
By this construction, increasing $r$ corresponds to increasing the \emph{condition number} of the data covariance matrix
(see Figure~\ref{fig:condition-number}).
Thus, the larger $r$ is, the more ill-posed the data covariance matrix is.

\begin{figure}[h]
\centering
\includegraphics[width = 0.4\textwidth]{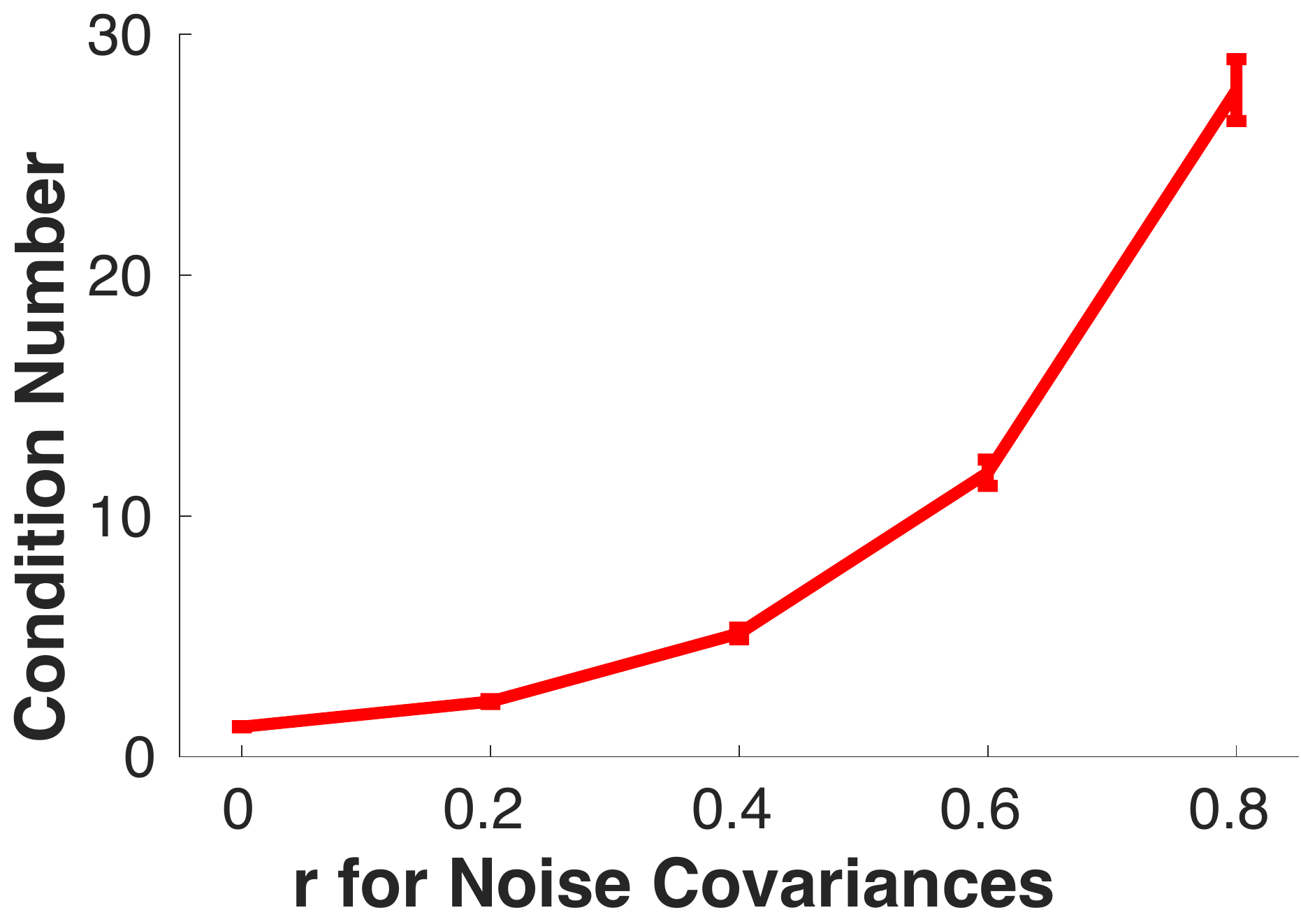}
\caption{Condition number of the data covariance matrix as a function of experiment parameter $r$ (with non-Gaussian components generated from the Gaussian mixture).
}
\label{fig:condition-number}
\end{figure}

\end{document}